\theoremstyle{definition}
\newtheorem{definition}{Definition}[]
\theoremstyle{plain}
\newtheorem{theorem}[definition]{Theorem}
\newtheorem{lemma}[definition]{Lemma}
\definecolor{ao}{rgb}{0.3, .7, 0.0}
\newcommand{\eee}{\color{black}}
\newcommand{\mzz}{\color{black}}
\icmltitlerunning{Achieving High Accuracy with PINNs via Energy Natural Gradient Descent}
\begin{document}

\twocolumn[
\icmltitle{Achieving High Accuracy with PINNs via Energy Natural Gradient Descent}



\icmlsetsymbol{equal}{*}

\begin{icmlauthorlist}
\icmlauthor{Johannes M\"{u}ller}{mpi}
\icmlauthor{Marius Zeinhofer}{simula}
\end{icmlauthorlist}

\icmlaffiliation{mpi}{Max Planck Institute for Mathematics in the Sciences
Inselstraße 22, 04103 Leipzig, Germany}
\icmlaffiliation{simula}{Department of Numerical Analysis and Scientific Computing, Simula Research Laboratory, Kristian Augusts Gate 23, 0164, Oslo, Norway}

\icmlcorrespondingauthor{Marius Zeinhofer}{mariusz@simula.no}
\icmlcorrespondingauthor{Johannes M\"{u}ller}{jmueller@mis.mpg.de}

\icmlkeywords{Machine Learning, ICML}

\vskip 0.3in
]



\printAffiliationsAndNotice{}  

\begin{abstract}
We propose \emph{energy natural gradient descent}, a natural gradient method with respect to a Hessian-induced Riemannian metric as an optimization algorithm for physics-informed neural networks (PINNs) and the deep Ritz method. As a main motivation we show that the update direction in function space resulting from the energy natural gradient corresponds to the Newton direction modulo an orthogonal projection onto the model's tangent space. We demonstrate experimentally that energy natural gradient descent yields highly accurate solutions with errors several orders of magnitude smaller than what is obtained when training PINNs with standard optimizers like gradient descent, Adam or BFGS, even when those are allowed significantly more computation time. 
\mzz 
We show that the approach can be combined with deterministic and stochastic discretizations of the integral terms and with deep networks allowing for an application in higher dimensional settings.
\eee
\end{abstract}
%

\section{Introduction}

Neural network based PDE solvers have recently experienced an enormous growth in popularity and attention within the scientific community following the works of~\cite{weinan2017deep, han2018solving, sirignano2018dgm, weinan2018deep, raissi2019physics, li2021fourier}. 
In this article we focus on methods, which parametrize the solution of the PDE by a neural network and use a formulation of the PDE in terms of a minimization problem to construct a loss function used to train the network.
The works following this ansatz can be divided into the two approaches: (a) residual minimization of the PDEs residual in strong form, this is known under the name \emph{physics informed neural networks } or \emph{deep Galerkin method}, see for example~\cite{dissanayake1994neural, lagaris1998artificial, sirignano2018dgm, raissi2019physics}; (b) if existent, leveraging the variational formulation to obtain a loss function, this is known as the \emph{deep Ritz method}~\cite{weinan2018deep}, see also~\cite{beck2020overview, weinan2021algorithms} for in depth reviews of these methods. 

One central reason for the rapid development of these methods is their mesh free nature which allows easy incorporation of data and their promise to be effective in high-dimensional and parametric problems, that render mesh-based approaches infeasible. Nevertheless, in practice when these approaches are tackled directly with well established optimizers like GD, SGD, Adam or BFGS, they often fail to produce accurate solutions even for problems of small size. This phenomenon is increasingly well documented in the literature where it is attributed to an insufficient optimization leading to a variety of optimization procedures being suggested, where accuracy better than in the order of $10^{-3}$ relative $L^2$ error can rarely be achieved \cite{hao2021efficient, wang2021understanding, wang2022and, krishnapriyan2021characterizing, davi2022pso, zeng2022competitive}. The only exceptions are ansatzes, which are conceptionally different from direct gradient based optimization, more precisely greedy algorithms and a reformulation as a 
min-max game~\cite{hao2021efficient, zeng2022competitive}.

\paragraph{Contributions}
We provide a simple, yet effective optimization method that achieves high accuracy for a range of PDEs when combined with the PINN ansatz. 
Although we evaluate the approach on PDE related tasks, it can be applied to a wide variety of training problems.
Our main contributions can be summarized as follows:
\begin{itemize}
    \item We introduce the notion of \emph{energy natural gradients}. 
    This natural gradient is defined via the Hessian of the training objective in function space\mzz, see Definition~\ref{def:ENG}\eee 
    
    We show that an energy natural gradient update in
    parameter space corresponds to a Newton update in
    function space. In particular, for quadratic energies the
    function space update approximately moves into the
    direction of the error $u^\ast-u_\theta$\mzz, see Theorem~\ref{thm:main_thm}\eee.
    
    \item We demonstrate the capabilities of the energy natural gradient combined with a simple line search to achieve an accuracy, which is several orders of magnitude higher compared to standard optimizers like GD, Adam, \mzz BFGS \eee or a natural gradient defined via Sobolev inner products. 
    These examples include PINN formulations of stationary and evolutionary PDEs as well as the deep Ritz formulation of a nonlinear ODE. \mzz The numerical evaluation is contained in Section~\ref{sec:experiments}\eee. 
    
\end{itemize}

\paragraph{Related Works}
Here, we 
focus on improving the training process and thereby the accuracy of PINNs. 
It has been observed that 
the magnitude of the gradient contributions from the PDE residuum, the boundary terms and the initial conditions 
often possess imbalanced magnitudes.
To address this, different weighting strategies for the individual components of the loss have been developed~\cite{wang2021understanding,van2022optimally,wang2022and}.
Albeit improving PINN training, non of the mentioned works reports relative $L^2$ errors below $10^{-4}$. 

The choice of the collocation points in the discretization of PINN losses has been investigated in a variety of works~\cite{lu2021deepxde, nabian2021efficient, daw2022rethinking,zapf2022investigating, wang2022respecting, wu2023comprehensive}. 
Common in all these studies is the observation that collocation points should be concentrated in regions of high PDE residual and 
we refer to~\cite{daw2022rethinking, wu2023comprehensive} for an extensive comparisons of the different proposed sampling strategies in the literature. Further, for time dependent problems curriculum learning
is reported to mitigate training pathologies associated with solving evolution problems with a long time horizon~\cite{wang2022respecting, krishnapriyan2021characterizing}. Again, while all aforementioned works considerably improve PINN training, in non of the contributions errors below $10^{-4}$ could be achieved.

Different optimization strategies, which are conceptionally different to a direct gradient based optimization of the objective, have been proposed in the context of PINNs.
For instance, greedy algorithms where used to incrementally build a shallow neural neuron by neuron, which led to high accuracy, up to relative errors of $10^{-8}$, for a wide range of PDEs~\cite{hao2021efficient}. However, the proposed greedy algorithms are only computationally tractable for shallow neural networks.
Another ansatz is to reformulate the quadratic PINN loss as a saddle-point problem involving a network for the approximation of the solution and a discriminator network that penalizes a non-zero residual. The resulting saddle-point formulation can be solved with competitive gradient descent~\cite{zeng2022competitive} and the authors report highly accurate -- up to $10^{-8}$ relative $L^2$ error -- PINN solutions for a number of example problems. 
This approach however comes at the price of training two neural networks and exchanging a minimization problem for a saddle-point problem.  
Finally, particle swarm optimization methods have been proposed in the context of PINNs, where they improve over the accuracy of standard optimizers, but fail to achieve  accuracy better than $10^{-3}$ despite their computation burden~\cite{davi2022pso}.

Natural gradient methods are an established optimization algorithm and we give an overview in Section~\ref{sec:natgrad} and discuss here only works related to the numerical solution of PDEs. 
In fact, without explicitly referring to the natural gradient literature and terminology, natural gradients are used in the PDE constrained optimization community in the context of finite elements. For example, in certain situations the mass or stiffness matrices can be interpreted as Gramians, showing that this ansatz is indeed a natural gradient method. For explicit examples we refer to ~\cite{schwedes2016iteration, schwedes2017mesh}. 
In the context of neural network based approaches, a variety of natural gradients induced by Sobolev, Fisher-Rao and Wasserstein geometries have been proposed and tested for PINNs~\cite{nurbekyan2022efficient}.
This work focuses on the efficient implementation of these methods and does not consider energy based natural gradients, which we find to be necessary in order to achieve high accuracy.

\paragraph{Notation}
We denote the space of functions on \(\Omega\subseteq\mathbb R^d\) that are integrable in $p$-th power by \(L^p(\Omega)\) and endow it with its canonical norm. 
For a sufficiently smooth function $u$ we denote its partial derivatives by $\partial_i u = \partial u / \partial x_i$ and denote the tensor associated by the $l$-th derivative by $(D^lu)_{i_1, \dots, i_l} \coloneqq \partial_{i_1}\dots\partial_{i_l} u$. 
We denote the gradient of a sufficiently smooth function $u$ by $\nabla u = (\partial_1 u, \dots, \partial_d u)^\top$ and the Laplace operator $\Delta$ is defined by $\Delta u \coloneqq \sum_{i=1}^d \partial_i^2 u$.
We denote the \emph{Sobolev space} 
of functions with weak derivatives up to order \(k\) in $L^p(\Omega)$ by \(W^{k,p}(\Omega)\), which is a Banach space with the norm
\[ \lVert u\rVert_{W^{k,p}(\Omega)}^p \coloneqq \sum_{l=0}^k\lVert D^{l} u \rVert_{L^p(\Omega)}^p. \]
In the following we mostly work with the case $p=2$ and write $H^k(\Omega)$ instead of $W^{k,2}(\Omega)$.

Consider natural numbers \(d, m, L, N_0, \dots, N_L\) and let $\theta = \left((A_1, b_1), \dots, (A_L, b_L)\right)$ be a tuple of matrix-vector pairs where \(A_l\in\mathbb R^{N_{l}\times N_{l-1}}, b_l\in\mathbb R^{N_l}\) and \(N_0 = d, N_L = m\). Every matrix vector pair \((A_l, b_l)\) induces an affine linear map \(T_l\colon \mathbb R^{N_{l-1}} \to\mathbb R^{N_l}\). The \emph{neural network function with parameters} \(\theta\) and with respect to some \emph{activation function} \(\rho\colon\mathbb R\to\mathbb R\) is the function
\[u_\theta\colon\mathbb R^d\to\mathbb R^m, \quad x\mapsto T_L(\rho(T_{L-1}(\rho(\cdots \rho(T_1(x)))))).\]
The \emph{number of parameters} and the \emph{number of neurons} of such a network is given by \(\sum_{l=0}^{L-1}(n_{l}+1)n_{l+1}\)
.
We call a network \emph{shallow} if it has depth \(2\) and \emph{deep} otherwise. 
In the remainder, we restrict ourselves to the case \(m=1\) since we only consider real valued functions.
Further, in our experiments we choose $\tanh$ as an activation function in order to assume the required notion of smoothness of the network functions $u_\theta$ and the parametrization $\theta\mapsto u_\theta$.

For $A\in\mathbb R^{n\times m}$ we denote any pseudo inverse of $A$ by $A^+$.

\section{Preliminaries}

Various neural network based approaches for the approximate solution of PDEs have been suggested~\cite{beck2020overview, weinan2021algorithms, kovachki2021neural}.
Most of these cast the solution of the PDE as the minimizer of a typically convex energy over some function space and use this energy to optimize the networks parameters. We present two prominent approaches and introduce the unified setup that we use to treat both of these approaches later. 

\paragraph{Physics-Informed Neural Networks}
Consider a general partial differential equation of the form
\begin{align}
    \begin{split}
        \mathcal L u 
        & = f \quad \text{in } \Omega \\
        \mathcal B u & = g \quad \text{on } \partial\Omega,
    \end{split}
    \label{eq:Poisson}
\end{align}
where $\Omega\subseteq\mathbb R^d$ is an open set, $\mathcal L$ is a -- possibly non-linear -- partial differential operator and $\mathcal B$ is a boundary value operator. We assume that the solution $u$ is sought in a Hilbert space $X$ and that the right-hand side $f$ and the boundary values $g$ are square integrable functions on $\Omega$ and $\partial\Omega$ respectively. In this situation, we can reformulate \eqref{eq:Poisson} as a minimization problem with objective function
\begin{equation}
    E(u) =  \int_\Omega (\mathcal L u - f)^2
    \mathrm{d}x + \tau \int_{\partial\Omega} (\mathcal B u-g)^2\mathrm ds,
\end{equation}
for a penalization parameter $\tau > 0$. A function $u\in X$ solves \eqref{eq:Poisson} if and only if $E(u)=0$. In order to obtain an approximate solution, one can parametrize the function $u_\theta$ by a neural network and minimize the network parameters $\theta\in\mathbb R^p$ according to the loss function 
\begin{equation}
    L(\theta) \coloneqq \int_\Omega (\mathcal Lu_\theta - f)^2\mathrm dx + \tau \int_{\partial\Omega}\mathcal (\mathcal Bu_\theta - g)^2\mathrm ds.
\end{equation}
This general approach to formulate equations as minimization problems is known as \emph{residual minimization} and in the context of neural networks for PDEs can be traced back to~\cite{dissanayake1994neural, lagaris1998artificial}.
More recently, this ansatz was popularised under the names \emph{deep Galerkin method} or \emph{physics-informed neural networks}, where the loss can also be augmented to encorporate a regression term steming from real world measurements of the solution~\cite{sirignano2018dgm,raissi2019physics}. 
In practice, the integrals in the objective function have to be discretized in a suitable way.

\paragraph{The Deep Ritz Method}
When working with weak formulations of PDEs it is standard to consider the variational formulation, i.e., to consider an energy functional such that the Euler-Lagrange equations are the weak formulation of the PDE. This idea was already exploited by~\cite{ritz1909neue} to compute the coefficients of polynomial approximations to solutions of PDEs and popularized in the context of neural networks in~\cite{weinan2018deep} who coined the name \emph{deep Ritz method} for this approach.
Abstractly, this approach is similar to the residual formulation. Given a variational energy $E\colon X \to \mathbb R,$ on a Hilbert space $X$
one parametrizes the ansatz by a neural network $u_\theta$ and arrives at the loss function $L(\theta)\coloneqq E(u_\theta)$. 
Note that this approach is different from PINNs, for example for the Poisson equation $-\Delta u = f$, the residual energy is given by $u\mapsto\lVert \Delta u + f \rVert_{L^2(\Omega)}^2$, where the corresponding variational energy is given by $u\mapsto\frac12\lVert \nabla u\rVert_{L^2(\Omega)}^2-\int_\Omega fu\mathrm dx$. In particular, the energies require different smoothness of the functions and are hence defined on different Sobolev spaces.

Incorporating essential boundary values in the Deep Ritz Method differs from the PINN approach. 
Whereas in PINNs for any $\tau>0$ the unique minimizer of the energy is the solution of the PDE, in the deep Ritz method the minimizer of the penalized energy solves a Robin boundary value problem, which can be interpreted as a perturbed problem. In order to achieve a good approximation of the original problem the penalty parameters need to be large, which leads to ill conditioned problems~\cite{muller2022error, courte2023robin}.

\paragraph{General Setup}
Both, physics informed neural networks as well as the deep Ritz method fall in the general framework of minimizing an energy $E\colon X\to\mathbb R$ 
or more precisely the associated objective function $L(\theta) \coloneqq E(u_\theta)$ over the parameter space of a neural network. Here, we assume $X$ to be a Hilbert space of functions and the functions $u_\theta$ computed by the neural network with parameters $\theta$ to lie in $X$ and assume that $E$ admits a unique minimizer $u^\star\in X$. 
Further, we assume that the parametrization $P\colon\mathbb R^p\to X, \theta\mapsto u_\theta$ is differentiable and denote its range by $\mathcal F_\Theta=\{u_\theta:\theta\in\mathbb R^p\}$. We denote the generalized tangent space on this parametric model by 
\begin{equation}
    T_\theta \mathcal F_\Theta \coloneqq  
    \operatorname{span} \left\{\partial_{\theta_i} u_\theta : i=1, \dots, p \right\}.
\end{equation}

\paragraph{Accuracy of NN Based PDE Solvers}
Besides considerable improvement in the PINN training process, as discussed in the Section on related work, gradient based optimization of the original PINN formulation could so far not break a certain optimization barrier, even for simple situations. Typically achieved errors are of the order $10^{-3}$ measured in the $L^2$ norm. This phenomenon is attributed to the stiffness of the PINN formulation, as experimentally verified in \cite{wang2021understanding}. Furthermore, the squared residual formulation of the PDE squares the condition number -- which is well known for classical discretization approaches \cite{zeng2022competitive}. As discretizing PDEs leads to ill-conditioned linear systems, this deteriorates the convergence of iterative solvers such as standard gradient descent. On the other hand, natural gradient descent circumvents this \emph{pathology of the discretization} by guaranteeing an update direction following the function space gradient information where the PDE problem often is of a simpler structure. We refer to Theorem~\ref{thm:main_thm} and the Appendix~\ref{sec:proofs} for a rigorous explanation.

\section{Energy Natural Gradients 
}\label{sec:natgrad}
The concept of \emph{natural gradients} was popularized by Amari in the context of parameter estimation in supervised learning and blind source separation~\cite{amari1998natural}. 
The idea here is to modify the update direction in a gradient based optimization scheme to emulate gradient in a suitable representation space of the parameters.
Whereas, this ansatz was already formulated for general metrics it is usually attributed to the use of the Fisher metric on the representation space, but also products of Fisher metrics, Wasserstein and Sobolev geometries have been successfully used~\cite{kakade2001natural, li2018natural, nurbekyan2022efficient}.
After the initial applications in supervised learning and blind source separation, it was successfully adopted in reinforcement learning~\cite{kakade2001natural, peters2003reinforcement, bagnell2003covariant,morimura2008new}, inverse problems~\cite{nurbekyan2022efficient}, neural network training~\cite{schraudolph2002fast,pascanu2014revisiting, martens2020new} and generative models~\cite{shen2020sinkhorn, lin2021wasserstein}.
One sublety in the natural gradients is the definition of a geometry in the function space. This can either be done axiomatically or through the Hessian of a potential function~\cite{amari2010information, amari2016information, wang2022hessian, Mueller2022Convergence}. 
We follow the idea to work with the natural gradient induced by the Hessian of the convex function space objective in which the natural gradient can be interpreted as a generalized Gauss-Newton method which has been suggested for neural network training for supervised learning tasks~\cite{ren2019efficient, cai2019gram, gargiani2020promise, martens2020new}.
Contrary to existing works we encounter infinite dimensional and  and not strongly convex objective in our applications. 

Here, we consider the setting of the minimization of a convex energy $E\colon 
X\to\mathbb R$ defined on a Hilbert space $X$, which covers both physics informed neural networks and the deep Ritz method.
\mzz As an objective function for the optimization of the networks parameters we use $L(\theta) = E(u_\theta)$ like before. \eee
We define the \emph{Hilbert} and \emph{energy Gram matrices} by 
\begin{align}\label{eq:gramHilbert}
    G_H(\theta)_{ij} \coloneqq \langle\partial_{\theta_i} u_\theta, \partial_{\theta_j} u_\theta\rangle_X \quad 
\end{align}
and
\begin{equation}\label{eq:gramEnergy}
    G_E(\theta)_{ij} \coloneqq D^2E(u_\theta)(\partial_{\theta_i} u_\theta, \partial_{\theta_j} u_\theta).
\end{equation}
The update direction $\nabla^H L(\theta) = G_H(\theta)^+\nabla L(\theta)$ \mzz was proposed with $\langle \cdot, \cdot\rangle_{X}$ being a Sobolev inner product for neural network training~\cite{nurbekyan2022efficient}
and we refer to it as the \emph{Hilbert natural gradient} (H-NG) or in the special case the $X$ is a Sobolev space the \emph{Sobolev natural gradient}\eee. It is well known in the literature\footnote{For regular and singular Gram matrices and finite dimensional spaces see~\cite{amari2016information, van2022invariance}, an argument for infinite dimensional space can be found in the appendix.} on natural gradients that
\footnote{\mzz Here, the Hilbert space gradient $\nabla E(u)\in X$ is the unique element satisfying $\langle \nabla E(u), v\rangle_X = DE(u)v$, where $DE$ denotes the Fréchet derivative.
\eee
}
\begin{equation}
    DP_\theta\nabla^HL(\theta) = \Pi_{T_\theta \mathcal F_\Theta}( \nabla E(u_\theta)).
\end{equation}
In words, following the natural gradient amounts to moving along the projection of the Hilbert space gradient onto the model's tangent space in function space.
The observation that identifying the function space gradient via the Hessian leads to a Newton update motivates the concept of energy natural gradients that we now introduce.

\begin{definition}[Energy Natural Gradient]\label{def:ENG}
    Consider the problem 
    $\min_{\theta\in\mathbb R^p} L(\theta)$, 
    where $ 
    L(\theta)= E(u_\theta)$ and denote the Euclidean gradient by $\nabla L(\theta)$. 
    Then we call 
    \begin{equation}
        \nabla^E L(\theta) \coloneqq G_E^+(\theta)\nabla L(\theta),
    \end{equation}
    the 
    \emph{energy natural gradient (E-NG)}\footnote{Note that this is different from the \emph{energetic natural gradients} proposed in~\cite{thomas2016energetic}, which defines natural gradients based on the energy distance rather than the Fisher metric.}. 
\end{definition}

For a linear PDE operator $\mathcal L$, the residual 
yields a quadratic energy and the energy Gram matrix takes the form

\begin{align}
    \begin{split}
        G_E(\theta)_{ij} 
        &= 
        \int_\Omega \mathcal L (\partial_{\theta_i}u_\theta) \mathcal L (\partial_{\theta_j}u_\theta) \mathrm dx 
        \\
        &+ 
        \tau \int_{\partial\Omega} \mathcal B (\partial_{\theta_i}u_\theta) \mathcal B (\partial_{\theta_j}u_\theta)  \mathrm ds
    \end{split}
\end{align}

On the other hand, the deep Ritz method for a quadratic energy $E(u) = \frac12 a(u,u) - f(u)$, where $a$ is a symmetric and coercive bilinear form and $f\in X^*$ yields
\begin{equation}
    G_E(\theta)_{ij} = a(\partial_{\theta_i} u_\theta, \partial_{\theta_j}u_\theta).
\end{equation}

For the energy natural gradient we have the following result relating energy natural gradients to Newton updates. 
\begin{restatable}[Energy Natural Gradient in Function Space]{theorem}{ENGFunctionSpace}\label{thm:main_thm}
If we assume that $D^2E$ is coercive everywhere, then we have\footnote{Here, we interpret the bilinear form $D^2E(u_\theta)\colon H\times H\to\mathbb R$ as an operator $D^2E(u_\theta)\colon H\to H$; further $\Pi_{T_\theta \mathcal F_\Theta}^{D^2E(u_\theta)}$ denotes the projection with respect to the inner product defined by $D^2E(u_\theta)$.}
\begin{equation}\label{eq:pushforwardENGNewton}
    DP_\theta\nabla^EL(\theta) = \Pi_{T_\theta \mathcal F_\Theta}^{D^2E(u_\theta)}(D^2E(u_\theta)^{-1} \nabla E(u_\theta)).
\end{equation}
Assume now that $E$ is a quadratic function with bounded and positive definite 
second derivative $D^2E = a$ that admits a minimizer $u^*\in X$. then it holds that 
\begin{equation}\label{eq:pushforwardENG}
    DP_\theta\nabla^EL(\theta) = \Pi_{T_\theta \mathcal F_\Theta}^{a}( u_\theta - u^*).
\end{equation}
 
\end{restatable}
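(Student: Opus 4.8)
The plan is to reduce everything to a finite-dimensional normal-equation computation by factoring the energy Gram matrix. Write $J := DP_\theta \colon \mathbb R^p \to X$ for the Jacobian of the parametrization, so that $Jv = \sum_i v_i \partial_{\theta_i} u_\theta$ and $\operatorname{im} J = T_\theta\mathcal F_\Theta$, and let $J^*\colon X\to\mathbb R^p$ be its Hilbert-space adjoint, $(J^*w)_i = \langle \partial_{\theta_i}u_\theta, w\rangle_X$. The chain rule gives $\nabla L(\theta)_i = \langle \nabla E(u_\theta), \partial_{\theta_i}u_\theta\rangle_X$, i.e.\ $\nabla L(\theta) = J^*\nabla E(u_\theta)$. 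Interpreting $A := D^2E(u_\theta)$ as a bounded self-adjoint operator on $X$ (the identification in the footnote), definition \eqref{eq:gramEnergy} reads $G_E(\theta) = J^*AJ$. Hence the object to analyze is $DP_\theta\nabla^EL(\theta) = J(J^*AJ)^+J^*\nabla E(u_\theta)$, and the first claim becomes the purely structural statement that this equals the $A$-orthogonal projection onto $\operatorname{im} J$ of $h := A^{-1}\nabla E(u_\theta)$.

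Next I would characterize the projection through its normal equations. Since $A$ is coercive it induces an inner product $\langle A\cdot,\cdot\rangle_X$ equivalent to that of $X$, and because $\operatorname{im} J$ is finite-dimensional (hence closed) the projection $\Pi^{A}_{\operatorname{im} J}(h)$ exists and is the unique $w = Jc \in \operatorname{im} J$ with $A(h-w)$ orthogonal to $\operatorname{im} J$ in $X$. Testing against $Jv$ for all $v\in\mathbb R^p$ and using $J^*Ah = J^*\nabla E(u_\theta) = \nabla L(\theta)$ turns this into the finite-dimensional normal equation $G_E(\theta)\,c = \nabla L(\theta)$, so that $w = Jc$ for any solution $c$.

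The one point that needs care — and the main obstacle, particularly in the infinite-dimensional setting — is the bookkeeping for the pseudoinverse. Coercivity of $A$ is precisely what controls the relevant kernels: from $J^*AJc = 0$ one obtains $\langle AJc, Jc\rangle_X = 0$ and hence $Jc = 0$, giving $\ker G_E(\theta) = \ker J$. This shows at once that $w = Jc$ is independent of the chosen solution of the normal equation, and that $\operatorname{im} G_E(\theta) = (\ker J)^\perp$ contains $\nabla L(\theta) = J^*\nabla E(u_\theta)$, which annihilates $\ker J$. Consequently $c = G_E(\theta)^+\nabla L(\theta)$ is a genuine solution of the normal equation, and $J\,G_E(\theta)^+\nabla L(\theta) = \Pi^{A}_{\operatorname{im} J}(h)$, which is exactly \eqref{eq:pushforwardENGNewton}.

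Finally, for the quadratic case I would specialize $h$. Writing $E(u) = \tfrac12 a(u,u) - \ell(u)$ with $a$ bounded and positive definite and $A$ the associated operator, the function-space gradient is affine, $\nabla E(u) = Au - b$, where $b$ is the Riesz representative of $\ell$; the minimizer satisfies $\nabla E(u^*) = 0$, i.e.\ $Au^* = b$, so $\nabla E(u_\theta) = A(u_\theta - u^*)$ and therefore $h = A^{-1}\nabla E(u_\theta) = u_\theta - u^*$. Substituting into \eqref{eq:pushforwardENGNewton} yields \eqref{eq:pushforwardENG}.
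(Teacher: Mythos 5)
Your proof of the first identity \eqref{eq:pushforwardENGNewton} is correct and is in substance the paper's argument: the paper isolates the same structural fact as a lemma, namely that $A G^+ A^\ast x = \Pi_{R(A)}(x)$ for a linear map $A\colon\mathbb R^p\to X$ into an inner product space, and proves it by direct computation using the identity $GG^+G = G$, whereas you derive it from the normal equations plus the kernel identity $\ker G_E(\theta) = \ker DP_\theta$. Your bookkeeping of why $G_E(\theta)^+\nabla L(\theta)$ genuinely solves the normal equation (via $\operatorname{im} G_E(\theta) = (\ker DP_\theta)^\perp \ni \nabla L(\theta)$) is, if anything, more transparent than the paper's.

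The gap is in the quadratic case. The second part of the theorem does \emph{not} assume coercivity: $a$ is only bounded and positive definite, and this weakening is the entire reason the quadratic case is stated separately. For the motivating example, the Poisson PINN bilinear form $a(u,v) = \int_\Omega \Delta u\,\Delta v\,\mathrm dx + \int_{\partial\Omega} uv\,\mathrm ds$ is positive definite but not coercive on $H^2(\Omega)$, as the paper points out. Without coercivity the operator $A$ associated with $a$ is injective but in general not surjective (Lax--Milgram is unavailable), so $A^{-1}$ does not exist as a map on $X$, and \eqref{eq:pushforwardENGNewton} --- which you proved under the coercivity hypothesis --- cannot simply be cited and ``substituted into.'' This is precisely the subtlety the paper's proof addresses: it never inverts $A$, but instead observes that the residuum satisfies $a(u_\theta - u^\ast, v) = DE(u_\theta)v$ for all $v$, i.e.\ $u_\theta - u^\ast$ \emph{plays the role} of the $a$-gradient, and then applies the projection lemma with the scalar product $a$. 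Fortunately your argument is repairable with almost no change, because your normal-equation derivation never uses $A^{-1}$ itself: it only uses that the element $h$ being projected satisfies $J^\ast A h = \nabla L(\theta)$, equivalently $a(h, Jv) = DE(u_\theta)Jv$ for all $v$, together with positive definiteness of $a$ (which suffices both for $\ker G_E(\theta) = \ker J$ and for existence and uniqueness of the $a$-orthogonal projection onto the finite-dimensional space $\operatorname{im} J$). Since $\nabla E(u_\theta) = A(u_\theta - u^\ast)$ exhibits $h := u_\theta - u^\ast$ as such an element, rerunning your second and third paragraphs verbatim with this choice of $h$ proves \eqref{eq:pushforwardENG}; but as written, the last sentence of your proof invokes an inverse and a previously proved identity whose hypotheses are not satisfied here.
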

\begin{proof}[Proof idea, full proof in the appendix]
In the case that $D^2E$ is coercive, it induces a Riemannian metric on the Hilbert space $X$. 
Since the gradient with respect to this metric is given by $D^2E(u)^{-1}\nabla E(u)$ the identity~\eqref{eq:pushforwardENGNewton} follows analogously to the finite dimensional case or case of Hilbert space NGs.
In the case that the energy $E$ is quadratic and $D^2E=a$ is bounded and non degenerate, the gradient with respect to the inner product $a$ is not classically defined. 
However, one can check that $a(u-u^\ast, v) = DE(u)v$, i.e., that the error $u-u^*$
can be interpreted as a gradient with respect to the inner product $a$, which yields~\eqref{eq:pushforwardENG}. 
\end{proof}

In particular, we see from~\eqref{eq:pushforwardENGNewton} and~\eqref{eq:pushforwardENG} that using the energy NG in parameter space is closely related to a Newton update in function space, where for quadratic energies the Newton direction is given by the error $u_\theta-u^\star$.

\paragraph{Complexity of H-NG and E-NG}
The computation of the H-NG and E-NG is -- up to the assembly of the Gram matrices $G_H$ and $G_E$ -- equally expensive. 
Luckily, the Gram matrices are often equally expensive to compute.
For quadratic problems the Hessian is typically not harder to evaluate than the Hilbert inner product~\eqref{eq:gramHilbert} and even for non quadratic cases closed form expressions of~\eqref{eq:gramEnergy} in terms of inner products are often available, see~\ref{subsec:nonlinearDR}. Note that H-NG emulates GD and E-NG emulates a Newton method in $X$.
In practice, the computation of the natural gradient is expensive since it requires the solution of a system of linear equations, which has complexity $O(p^3)$, where $p$ is the parameter dimension. 
Compare this to the cost of $O(p)$ for the computation of the gradient.
In our experiments, we find that E-NGD achieves significantly higher accuracy compared to GD and Adam even when the latter once are allowed more computation time.

\section{Experiments}\label{sec:experiments}

We test the energy natural gradient approach on \mzz four \eee problems: a PINN formulation of a two-dimensional Poisson equation\mzz, a PINN formulation of a five-dimensional Poisson equation, \eee a PINN formulation of a one-dimensional heat equation and a deep Ritz formulation of a one-dimensional, nonlinear elliptic equation.

\paragraph{Description of the Method}
For all our numerical experiments, we realize an energy natural gradient step with a line search as described in Algorithm~\ref{alg:E-NGD}.
We choose the interval $[0,1]$ for the line search determining the learning rate since a learning rate of $1$ would correspond to an approximate Newton step in function space. 
However, since the parametrization of the model is non linear, it is beneficial to conduct the line search and can not simply choose the Newton step size.
In our experiments, we use a grid search over a logarithmically spaced grid on $[0,1]$ to determine the learning rate $\eta^*$. Although naive, this can easily be parallelized and performs fast and efficient in our experiments.
\begin{algorithm}
\caption{Energy Natural Gradient with Line Search}\label{alg:E-NGD}
\begin{algorithmic}
\STATE {\bfseries Input:} initial parameters $\theta_0\in\mathbb R^p$,  
$N_{max}$ 
\FOR{$k=1, \dots, N_{max}$}   
\STATE Compute $\nabla L(\theta)\in\mathbb R^p$ 
\STATE $G_E(\theta)_{ij} \gets D^2E(\partial_{\theta_i}u_\theta, \partial_{\theta_j}u_\theta)$ for $i,j =1, \dots, p$ 
\STATE $\nabla^E L(\theta) \gets G^+_E(\theta)\nabla L(\theta)$ 
\STATE $\eta^* \gets \arg\min_{\eta\in[0,1]} \,L( \theta - \eta \nabla^E L(\theta) )$ 
\STATE $\theta_k = \theta_{k-1} - \eta^* \nabla^E L(\theta)$ 
\ENDFOR
\end{algorithmic}
\end{algorithm}
The assembly of the Gram matrix $G_E$ can be done efficiently in parallel, avoiding a potentially costly loop over index pairs $(i,j)$. 
Instead of computing the pseudo inverse of the Gram matrix $G_E(\theta)$ 
we solve the least square problem 
\begin{equation}\label{eq:lsqNG}
    \nabla^E L(\theta)\in\arg\min_{\psi\in\mathbb R^p}\| G_E(\theta)\psi - \nabla L(\theta) \|^2_2.
\end{equation} 
For the numerical evaluation of the integrals appearing in the loss function as well as in the entries of the Gram matrix we \mzz experiment both with fixed integration points on a regular grid and repeatedly and randomly drawn integration points. \eee We initialize the network's weights and biases according to a Gaussian with standard deviation $0.1$ and vanishing mean.

\paragraph{\mzz Evaluation \eee}
\mzz
We report the relative\footnote{i.e., normalized by the norm of the solution} $L^2$ and $H^1$ errors during and after the optimization process. For this we use 10 times more integration points than during the optimization.
\eee
We compare the efficiency of 
energy NGs to the following optimizers. First, we consider vanilla gradient descent (denoted as GD in our experiments) with a line search on a logarithmic grid. Then, we test the performance of Adam with an exponentially decreasing learning rate schedule to prevent oscillations, where we start with an initial learning rate of $10^{-3}$ that after $1.5\cdot 10^4$ steps starts to decrease by a factor of $10^{-1}$ every $10^4$ steps until a minimum learning rate of $10^{-7}$ is reached or the maximal amount of iterations is completed. \mzz We do also compare to the quasi-Newton method BFGS \cite{nocedal1999numerical} \eee. 
Finally, we test the Hilbert natural gradient descent with line search (denoted by H-NGD).

\paragraph{Computation Details}
For our implementation we rely on the library JAX~\cite{jax2018github}, where all required derivatives are computed using JAX' automatic differentiation module. 
The JAX implementation of the least square solve  
relies on a singular value decomposition. 
For the implementation of the BFGS optimizer we rely on the implementation jaxopt.BFGS. 
All experiments were run on a single NVIDIA RTX 3080 Laptop GPU in double precision. The code to reproduce the experiments can be found in the repository \url{https://github.com/MariusZeinhofer/Natural-Gradient-PINNs-ICML23}.

\subsection{Poisson Equation}\label{sec:Poisson_2d}
We consider the two dimensional Poisson equation
\begin{equation*}
    -\Delta u (x,y) = f(x,y) = 2\pi^2\sin(\pi x) \sin(\pi y) 
\end{equation*}
on the unit square $[0,1]^2$ with zero boundary values. The solution is given by
\begin{equation*}
    u^*(x,y) = \sin(\pi x) \sin(\pi y)
\end{equation*}
and the PINN loss of the problem is
\begin{align}\label{eq:poisson_loss}
\begin{split}
    L (\theta) & = \frac{1}{N_\Omega}\sum_{i=1}^{N_\Omega}(\Delta u_\theta(x_i,y_i) + f(x_i,y_i))^2 \\ & \qquad\qquad\quad  + \frac{1}{N_{\partial\Omega}}\sum_{i=1}^{N_{\partial\Omega}}u_\theta(x^b_i,y^b_i)^2,
\end{split}
\end{align}
where $\{(x_i,y_i)\}_{i=1,\dots,N_\Omega}$ denote the interior collocation points and $\{(x^b_i,y^b_i)\}_{i=1,\dots,N_{\partial\Omega}}$ denote the collocation points on $\partial\Omega$. In this case the energy inner product on $H^2(\Omega)$ is given by
\begin{equation}\label{eq:poisson_energy_product}
    a(u,v) = \int_\Omega \Delta u \Delta v  \mathrm dx + \int_{\partial\Omega}u v \mathrm ds.
\end{equation}
Note that this inner product is not coercive\footnote{the inner product is coercive with respect to the $H^{1/2}(\Omega)$ norm, see~\cite{muller2022notes}} on $H^2(\Omega)$ and different from the $H^2(\Omega)$ inner product. 
The integrals in  
~\eqref{eq:poisson_energy_product} are computed using the same collocation points as in the definition of the PINN loss function $L$ in~\eqref{eq:poisson_loss}. To approximate the solution $u^*$ we use a shallow neural network with the hyperbolic tangent as activation function and a width of 64, thus there are 257 trainable weights. We choose 900 equi-distantly spaced collocation points in the interior of $\Omega$ and 120 collocation points on the boundary. The energy natural gradient descent and the Hilbert natural gradient descent are applied for $500$ iterations each, whereas we train for $2\cdot 10^{5}$ iterations of GD and Adam. 

\begin{figure}[h]
    \centering
    \includegraphics[width=\linewidth]{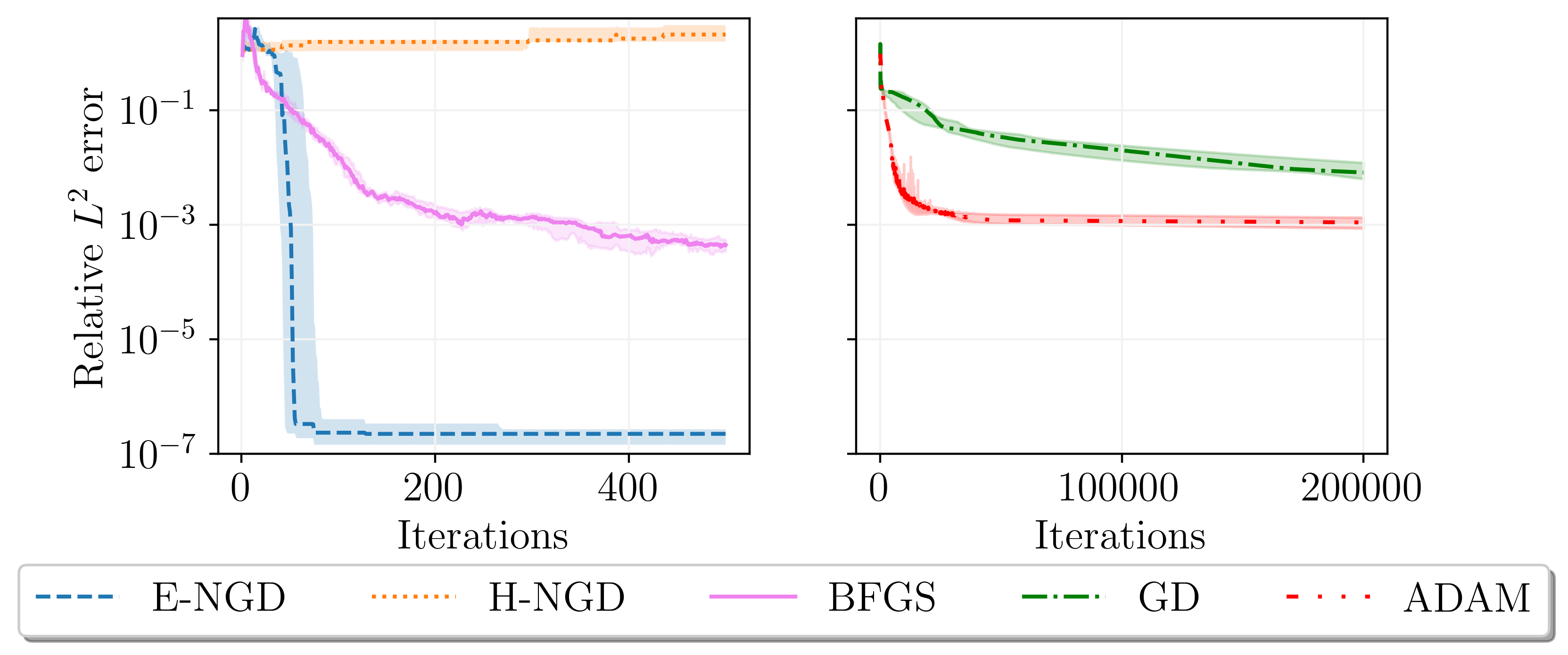}
    \vspace{-0.5cm}
    \caption{
    \mzz
    Median relative $L^2$ errors  for the two dimensional Poisson equation example over 10  
    initializations for the five optimizers
    ; 
    the shaded area denotes the region between the first and third quartile
    ; note that GD and Adam are run for $400$ times more iterations and GD, Adam and BFGS are given significantly more computation time than NGD, see~Table~\ref{table:poisson_runtimes}. \eee
    }\label{fig:poisson}
\end{figure}

\renewcommand{\arraystretch}{1.1}
\begin{table}[h]
\begin{center}
\begin{tabular}{|c|c|c|c|}
\hline
                & Median & Minimum & Maximum  \\
\hline 
GD 
& $8.2 \cdot 10^{-3}$  & $2.6 \cdot 10^{-3}$   &  $1.5 \cdot 10^{-2}$  \\
Adam & $1.1 \cdot 10^{-3}$ & $6.9 \cdot 10^{-4}$  & $1.3 \cdot 10^{-3}$  \\
H-NGD 
& $1.2$  & $4.0$  & $2.1$  \\
E-NGD  
& $\mathbf{2.4\cdot 10^{-7}}$ & $\mathbf{1.0\cdot 10^{-7}}$  & $\mathbf{4.1\cdot 10^{-7}}$  \\
BFGS 
& $4.4 \cdot 10^{-4}$ & $1.2 \cdot 10^{-4}$  & $9.6 \cdot 10^{-4}$  \\
\hline
\end{tabular}
\caption{\mzz Median, minimum and maximum of the relative $L^2$ errors for the Poisson equation example achieved by different optimizers over $10$ 
initializations. Here, energy and Hilbert NG descent and BFGS are run for $500$ and the other methods for $2\cdot10^5$ iterations.\eee}\label{table:poisson}
\end{center}
\end{table}

\begin{table}[ht]
\begin{center}
\begin{tabular}{|c|c|c|}
\hline
& Time per Iteration      & Full Optimization Time   \\
\hline 
GD     & $\mathbf{1.8\cdot 10^{-2}} \textbf{ s}$  & $1 \textrm{h}$     \\
Adam   & $3.7\cdot 10^{-2} \textrm{s}$  & $1 \text{h } 6 \textrm{min}$      \\
H-NGD  & $8.9\cdot 10^{-2} \textrm{s}$  & $44.5 \textrm{s}$      \\ 
E-NGD  & $8.6\cdot 10^{-2} \textrm{s}$  & $\mathbf{43} \textbf{s}$      \\ 
BFGS   & $1.8 \textrm{s}$               & $15 \textrm{min}$      \\ 
\hline
\end{tabular}
\caption{\mzz Computational times for the optimizers 
for the two dimensional Poisson example. For the time per iteration we averaged over 100 iterations. The full optimization time is calculated as the product of total iteration count and iteration time. The experiments were conducted on a single NVIDIA RTX 3080 Laptop GPU. \eee}\label{table:poisson_runtimes}
\end{center}
\end{table}

As reported in Table~\ref{table:poisson} and Figure~\ref{fig:poisson}, we observe that the energy NG updates require relatively few iterations to produce a highly accurate approximate solution of the Poisson equation.  Note that the Hilbert NG descent did not converge at all, stressing the importance of employing the geometric information of the Hessian of the function space objective, as is done in energy NG descent.

\mzz
The first order optimizers we consider, i.e., Adam and vanilla gradient descent
reliably decrease the relative errors, but fail to achieve an accuracy higher than $6.9\cdot 10^{-4}$ even though we allow for a much higher number of iterations. The quasi-Newton method BFGS \cite{nocedal1999numerical} achieves higher accuracy than the first order methods, however the energy natural gradient method is still roughly two orders of magnitude more accurate, compare to Table~\ref{table:poisson}.   
\eee

\mzz
With our current implementation and the network sizes we consider, one natural gradient update is only twice to three times as costly as one iteration of the Adam algorithm, compare also to Table~\ref{table:poisson_runtimes}.
Training a PINN model with optimizers such as Adam easily requires $100$ times the amount of iterations -- without being able to produce highly accurate solutions -- of what we found necessary for natural gradient training, rendering the proposed approach both faster and more accurate. Note that one optimization using the natural gradient method takes less than a minute, whereas the optimization time using the Adam optimizer takes above two hours, this is an improvement by two orders of magnitude.
\eee 

To illustrate the difference between the energy natural gradient $\nabla^E L(\theta)$, the standard gradient $\nabla L(\theta)$ and the
error 
$ 
u_\theta - u^*$, we plot the effective update directions $DP(\theta) \nabla L(\theta)$ and $DP(\theta) \nabla^{E} L(\theta)$ in function space at initialization, see Figure~\ref{fig:poisson_pushs}.  
Clearly, the energy natural gradient update direction matches the error much better than the vanilla parameter gradient.

\begin{figure}[h]
    \centering
    \begin{tikzpicture}
        \node[inner sep=0pt] (r1) at (0,0)
    {\includegraphics[width=\linewidth]{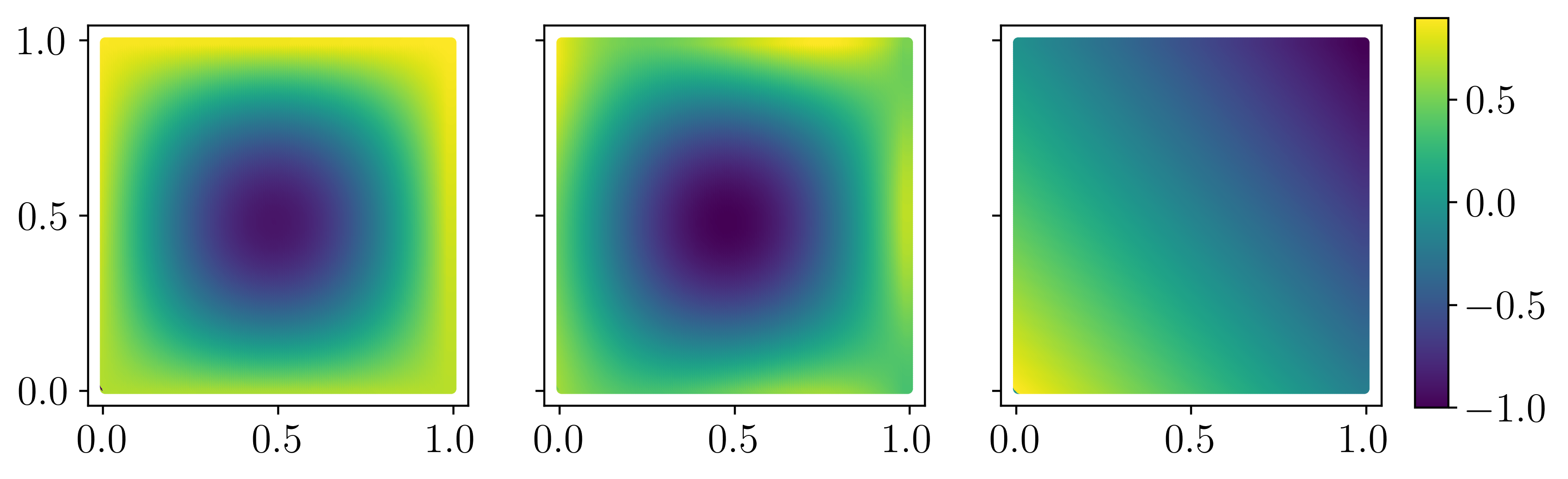}};
        \node[inner sep=0pt] (r1) at (-2.6,1.4)
    {\small 
        \footnotesize $u_\theta-u^\ast$
    };
        \node[inner sep=0pt] (r1) at (-0.35,1.4)
    {
    \footnotesize Energy NG  
    };
        \node[inner sep=0pt] (r1) at (2,1.4)
    {\footnotesize Vanilla gradient 
    };
    \end{tikzpicture}
    
    \caption{Shown are the error $u_\theta - u^*$ and the push forwards of the energy NG  and vanilla gradient; all functions  normed to lie in $[-1,1]$ to allow for a visual comparison.}\label{fig:poisson_pushs}
\end{figure}

\mzz
\subsection{An Example in Higher Dimensions}
As an example in higher dimensions we consider again the Poisson equation in five spatial dimensions
    \begin{align*}
        -\Delta u & =  f \quad\quad \ \ \ \quad \quad \quad \text{in } [0, 1]^{5}, \\
        u(x) & = \sum_{k=1}^5 \sin(\pi x_k) \quad \ \text{on } \partial[0,1]^{5}.
    \end{align*}

We use the manufactured solution
\[ 
    u^\ast\colon \mathbb R^{5} \to\mathbb R, \quad   x \mapsto \sum_{k=1}^5 \sin(\pi x_k) 
\]
hence $f = \pi^2 u^\ast$. For a given set of interior and boundary collocation points $(x_1^i, \dots ,x_5^i)_{i=1,\dots,N_\Omega}\subseteq [0, 1]^{5}$ and $(x^{b, i}_1,\dots,x^{b, i}_5)_{i=1,\dots,N_{\partial\Omega}}\subseteq \partial[0, 1]^{5}$ we define the loss function and energy inner product exactly as in equation \eqref{eq:poisson_loss} and \eqref{eq:poisson_energy_product}. 
In this example we demonstrate batched training by repeatedly drawing $N_\Omega = 3000$ random interior and $N_{\partial\Omega} = 500$ boundary collocation points in every iteration of the training process. We use a shallow neural network with input dimension 5 and 64 hidden neurons and hyperbolic tangent activation. 

As presented in Figure~\ref{fig:poisson5d_runtimes}, the energy natural gradient method 
produces highly accurate solutions in relatively short time. The convergence behavior of Adam and vanilla gradient descent again display a quickly saturating behavior and neither is able to produce competitively accurate solutions. In this example the quasi-Newton method BFGS was not able to produce more accurate solutions than the Adam optimizer. Again, E-NGD is not only the most accurate optimizer by two orders of magnitude but it achieves this accuracy while being given one order of magnitude less time.

\begin{figure}[]
    \centering
    \includegraphics[width=\linewidth]{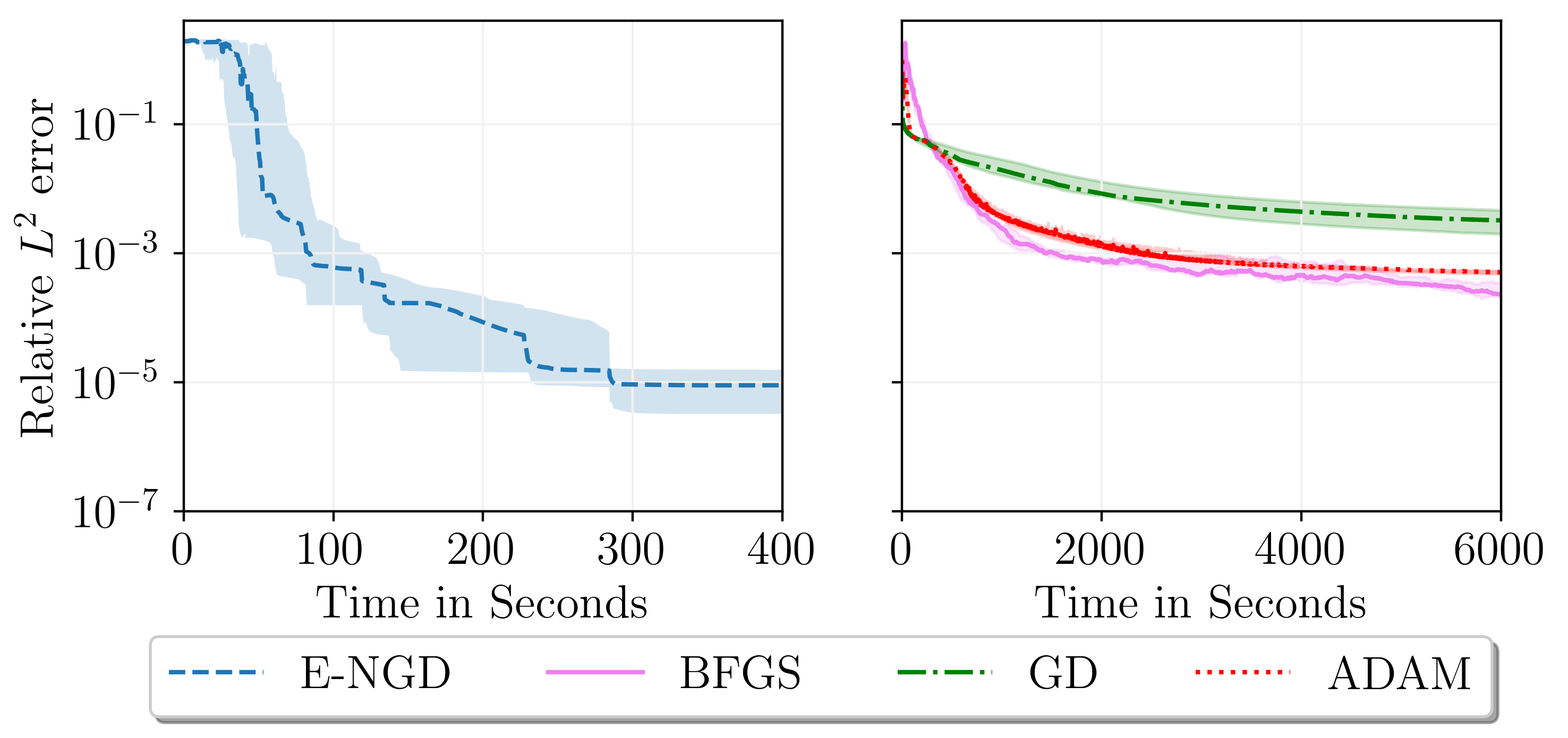}
    \vspace{-0.5cm}
    \caption{ 
    Median relative $L^2$ errors over computation time in seconds for the Poisson equation in five dimensions over 10
    initializations for the optimizers: energy NG descent, vanilla gradient descent and Adam; the shaded area denotes the region between the first and third quartile. Note the different scaling of the time axis for the two plots.
    }\label{fig:poisson5d_runtimes}
\end{figure}

\eee

\subsection{Heat Equation}
Let us consider the one-dimensional heat equation
\begin{align*}
    \partial_t u(t,x) &= \frac{1}{4}\partial_x^2u(t,x) \quad \text{for }(t,x)\in[0,1]^2
    \\
    u(0,x) &= \sin(\pi x) \qquad\;\, \text{for }x\in[0,1]
    \\
    u(t,x) &= 0 \qquad\qquad\quad\text{for }(t,x)\in[0,1]\times\{0,1\}.
\end{align*}
The solution is given by
\begin{equation*}
    u^*(t,x) = \exp\left(-\frac{\pi^2 t}{4}\right)\sin(\pi x)
\end{equation*}
and the PINN loss is
\begin{align*}
    L(\theta) &= \frac{1}{N_{\Omega_T}} \sum_{i=1}^{N_{\Omega_T}} \left( \partial_t u_\theta(t_i,x_i) - \frac14\partial_x^2 u_\theta(t_i, x_i) \right)^2 
    \\ 
    &\quad+ \frac{1}{N_\text{in}}\sum_{i=1}^{N_\Omega}\left(u_\theta(0,x_i^{\text{in}}) - \sin(\pi x_i^{\text{in}}) \right)^2
    \\&\quad +
    \frac{1}{N_{\partial\Omega}}\sum_{i=1}^{N_{\partial\Omega}}u_\theta(t^b_i,x^b_i)^2,
\end{align*}
where $\{ (t_i,x_i) \}_{i=1,\dots, N_{\Omega_T}}$ denote collocation points in the interior of the space-time cylinder, $\{ (t_i^b,x_i^b) \}_{i=1,\dots,N_{\partial\Omega}}$ denote collocation points on the spatial boundary and $\{ (x_i^{\text{in}}) \}_{i=1,\dots,N_{\text{in}}}$ denote collocation points for the initial condition. The energy inner product is defined on the space
\begin{equation*}
    a\colon\left(H^1(I,L^2(\Omega)) \cap L^2(I,H^2(\Omega))\right)^2 \to \mathbb R 
\end{equation*}
and given by
\begin{align*}
    a(u,v) &= \int_0^1\int_{\Omega}\left(\partial_t u - \frac14 \partial_x^2 u\right)\left(\partial_t v - \frac14 \partial_x^2 v\right)\,\mathrm dx\mathrm dt
    \\
    &\quad +
    \int_\Omega u(0,x)v(0,x)\, \mathrm dx + \int_{I\times\partial\Omega}uv \,\mathrm ds \mathrm dt.
\end{align*}
In our implementation, the inner product is discretized by the same quadrature points as in the definition of the loss function. 
The network architecture and the training process are identical to the previous example of the Poisson problem and we run the two NG methods for $2\cdot 10^3$ iterations.

\mzz
Also in this example, the energy natural gradient approach shows its high accuracy and efficiency. 
We refer to Table~\ref{table:heat} for the relative $L^2$ errors after training, Figure~\ref{fig:heat} for a visualization of the training process,  Table~\ref{table:heat_runtimes} for run-times and Figure \ref{fig:heat_pushs} for a visual comparison of the different gradients. 
\begin{table}[h]
\begin{center}
\begin{tabular}{|c|c|c|c|}
\hline

                & Median & Minimum & Maximum  \\
\hline 
GD  
& $1.6\cdot10^{-2}$  & $5.0\cdot10^{-3}$ & $4.2\cdot10^{-2}$   \\
Adam & $1.0\cdot10^{-3}$ & $6.4\cdot10^{-4}$  & $1.4\cdot10^{-3}$  \\
H-NGD  
& $4\cdot10^{-1}$  & $3\cdot10^{-1}$ & $5\cdot10^{-1}$  \\
E-NGD  
& $\mathbf{6.3\cdot 10^{-6}}$ & $\mathbf{2.3\cdot 10^{-6}}$  & $5.6\cdot10^{-1}$  \\
BFGS  
& $1.4\cdot 10^{-4}$ & $7.6\cdot10^{-5}$  & $\mathbf{3.3\cdot10^{-4}}$  \\
\hline
\end{tabular}
\caption{Median, minimum and maximum of the relative $L^2$ errors for the heat equation achieved by different optimizers over $10$ different initializations. Here, H-NGD and E-NGD is run for $2\cdot 10^3$ and the other methods for $2\cdot10^5$ iterations.}\label{table:heat}
\end{center}
\end{table}
\begin{figure}[h]
    \centering
    \includegraphics[width=\linewidth]{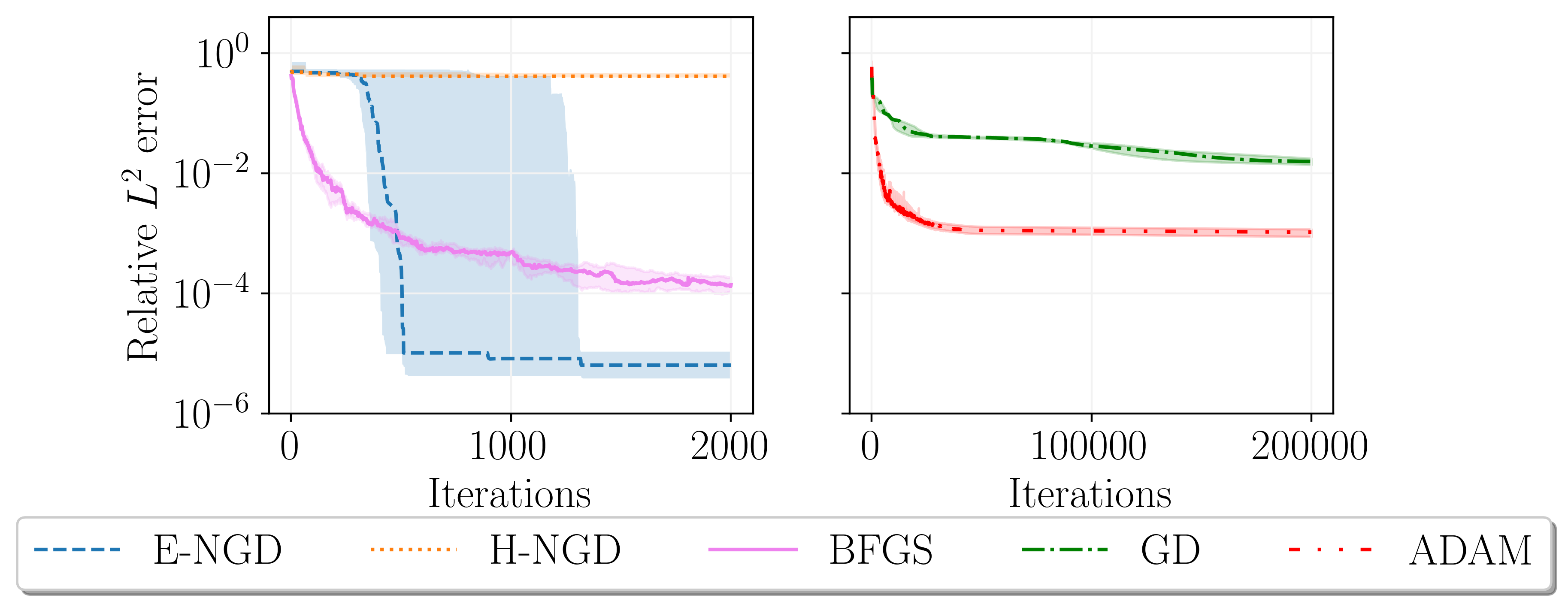}
    \vspace{-.5cm}
    \caption{\mzz Relative $L^2$ errors for the heat equation example throughout the training process for the five optimizers
    . The shaded area displays the region between the first and third quartile of 10 runs for different initializations. Note that GD and Adam are run for 100 times more iterations and GD, Adam and BFGS are given significantly more computation time than NGD, see~Table~\ref{table:heat_runtimes}. \eee
    }\label{fig:heat}
\end{figure}
\begin{table}[h]
\begin{center}
\begin{tabular}{|c|c|c|}
\hline
       & Time Iteration      & Time Full Optimization   \\
\hline 
GD     & $\mathbf{2.2\cdot 10^{-2}} \textbf{s}$  & $1 \textrm{h } 12 \textrm{min}$     \\
Adam   & $3.8\cdot 10^{-2} \textrm{s}$  & $ 2\textrm{h } 6 \textrm{min} $      \\
E-NGD  & $8.3\cdot 10^{-2} \textrm{s}$  & $\mathbf{2} \textbf{min } \mathbf{48} \textbf{s}$      \\ 
BFGS   & $4.3 \textrm{s}$               & $35 \textrm{min } 48\textrm{s}$      \\ 
\hline
\end{tabular}
\caption{\mzz Computational time for the optimizers  
for the heat equation. For the time per iteration we averaged over 100 iterations. The full optimization time is calculated as the product of total iteration count and averaged iteration time. The experiments were conducted on a single NVIDIA RTX 3080 Laptop GPU. \eee}\label{table:heat_runtimes}
\end{center}
\end{table}
\begin{figure}[h]
    \centering
    \begin{tikzpicture}
        \node[inner sep=0pt] (r1) at (0,0)
    {\includegraphics[width=\linewidth]{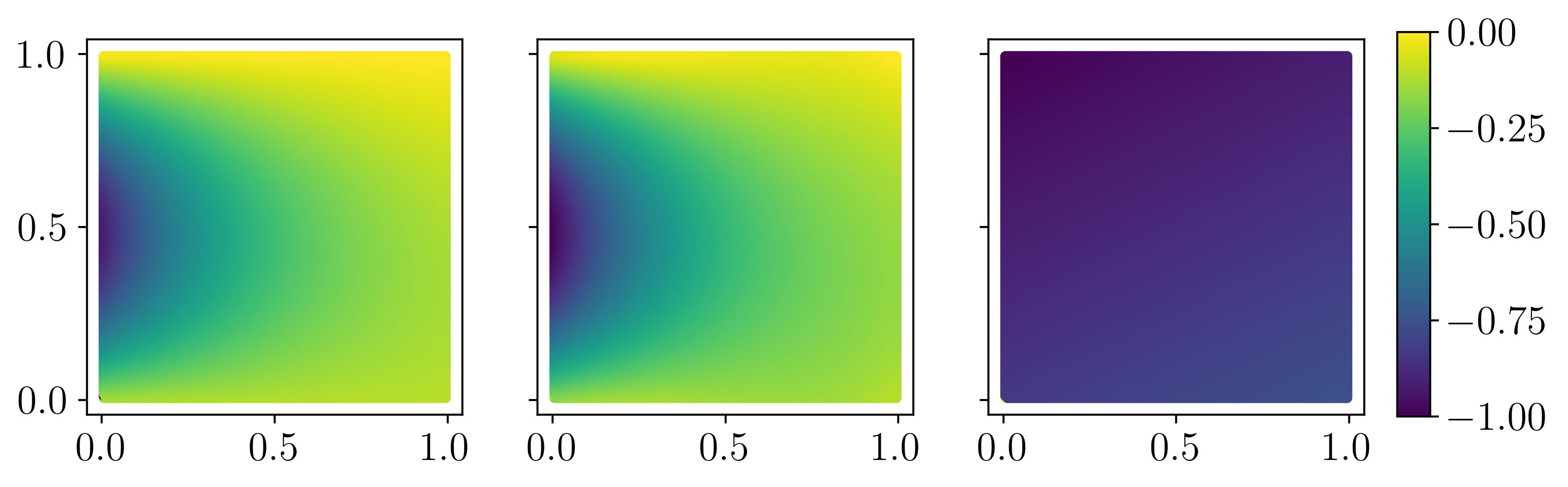}};
        \node[inner sep=0pt] (r1) at (-2.6,1.4)
    {\small   
        \footnotesize $u_\theta-u^\ast$
    };
        \node[inner sep=0pt] (r1) at (-0.35,1.4)
    {
    \footnotesize Energy NG 
    };
        \node[inner sep=0pt] (r1) at (2,1.4)
    {\footnotesize Vanilla gradient
    };
    \end{tikzpicture}
    
    \caption{The first image shows $u_\theta - u^*$, the second image is the computed natural gradient and the last image is the pushforward of the standard parameter gradient. All gradients are pointwise normed to $[-1,1]$ to allow visual comparison.}\label{fig:heat_pushs}
\end{figure}
Note again the saturation of the conventional optimizers above $10^{-3}$ relative $L^2$ error although they are given one order of magnitude more computation time. 
\eee
Similar to the Poisson equation, the Hilbert NG descent is not an efficient optimization algorithm for the problem at hand which stresses again the importance of the Hessian information. 

\mzz
Note also, that while E-NG is highly efficient for most initializations we observed that sometimes a failure to train can occur, compare to Table~\ref{table:heat}. We did also note that conducting a pre-training, for instance with GD or Adam was in most cases able to circumvent this issue.
\eee

\subsection{A Nonlinear Example with the Deep Ritz Method}\label{subsec:nonlinearDR}

We test the energy natural gradient method for a nonlinear problem utilizing the deep Ritz formulation. Consider the one-dimensional variational problem of finding the minimizer of the energy
\begin{equation}
    E(u) \coloneqq 
    \frac12\int_\Omega |u'|^2\mathrm dx + \frac14 \int_\Omega u^4\mathrm dx - \int_\Omega fu\,\mathrm dx
\end{equation}
with $\Omega = [-1,1]$, $f(x) = \pi^2\cos(\pi x)+\cos^3(\pi x)$. 
The associated Euler Lagrange equations yield the nonlinear PDE
\begin{align}
    \begin{split}\label{eq:nonlinear}
        -u'' + u^3 &= f \quad \text{in }\Omega
    \\ 
    \partial_n u &=0 \quad \text{on }\partial\Omega
    \end{split}
\end{align}
and hence 
the minimizer is given by  
$u^\ast(x) = \cos(\pi x)$
. 
Since the energy is not quadratic, the energy inner product depends on $u\in H^1(\Omega)$ and is given by
\begin{equation*}
    D^2E(u)(v,w) = \int_\Omega v'w'\,\mathrm dx + 3\int_\Omega u^2vw\,\mathrm dx.
\end{equation*}
To discretize the energy and the inner product we use trapezoidal integration with $2\cdot10^4$ equi-spaced quadrature points. 
We use a shallow neural network 
of width of 32 neurons and a hyperbolic tangent as an activation function.

Once more, we observe that the energy NG updates efficiently lead to a very accurate approximation of the solution, see Figure~\ref{fig:nonlinear} for a visualization of the training process and Table~\ref{table:nonlinear} for the obtained relative $L^2$ errors. The computation times for the individual methods are reported in Table~\ref{table:nonlinear_runtimes}. 
\begin{table}[h]
\begin{center}
\begin{tabular}{|c|c|c|c|}
\hline


                & Median & Minimum & Maximum  \\
\hline 
GD 
& $2.2\cdot10^{-4}$  & $1.2\cdot10^{-4}$ & $2.6\cdot10^{-4}$   \\
Adam 
& $5.3\cdot10^{-5}$   & $2.4\cdot10^{-5}$ & $1.1\cdot10^{-4}$  \\
H-NGD 
& $\mathbf{1.0\cdot10^{-8}}$  & $6.3\cdot10^{-9}$  & $1.0$  \\
E-NGD 
& $1.3\cdot10^{-8}$  & $\mathbf{6.0\cdot10^{-9}}$  & $1.0$  \\
BFGS 
& $1.2\cdot10^{-5}$  & $4.7\cdot10^{-6}$  & $\mathbf{2.2\cdot10^{-5}}$  \\
\hline
\end{tabular}
\caption{ 
Median, minimum and maximum of the relative $L^2$ errors for the nonlinear problem achieved by different optimizers over $10$ different initializations. Here, H-NGD, E-NGD and BFGS is run for $500$ and the other methods for $2\cdot10^5$ iterations.
}\label{table:nonlinear}
\end{center}
\end{table}
\begin{figure}[h]
    \centering
    \includegraphics[width=\linewidth]{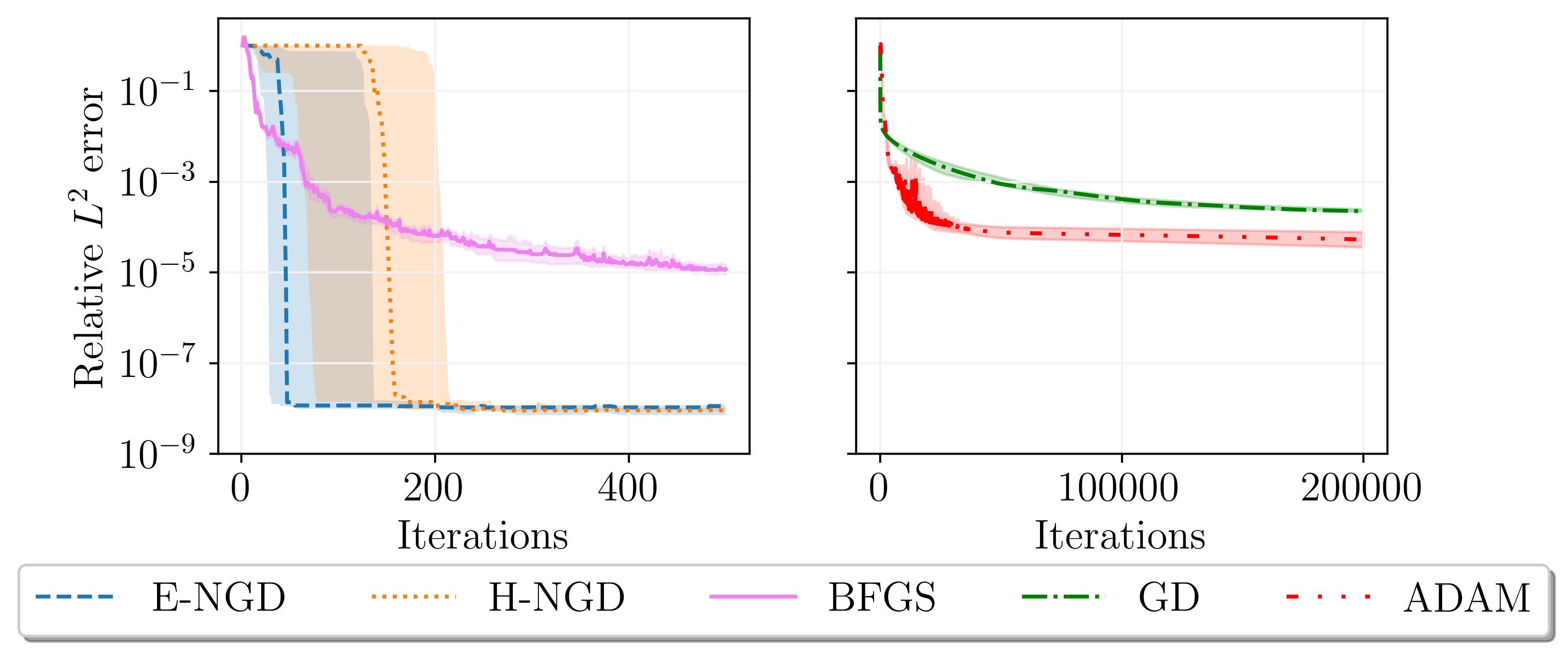}
    \vspace{-.5cm}
    \caption{\mzz
    Relative $L^2$ errors for the nonlinear example throughout the training process for the five optimizers
    . The shaded area displays the region between the first and third quartile of 10 runs for different initializations
    . Note that GD and Adam are run for $400$ times more iterations and GD, Adam and BFGS are given significantly more computation time than NGD, see~Table~\ref{table:deep_runtimes}.
    }\label{fig:nonlinear}
\end{figure}
\begin{table}[h]
\begin{center}
\begin{tabular}{|c|c|c|}
\hline
       & Time per Iteration      & Full Optimization Time   \\
\hline 
GD     & $\mathbf{3.7\cdot 10^{-2}} \textbf{ s}$  & $2 \textrm{h } 3 \textrm{min}$     \\
Adam   & $5.8\cdot 10^{-2} \textrm{s}$  & $3 \text{h } 13 \textrm{min}$      \\
H-NGD  & $8.3\cdot 10^{-2} \textrm{s}$  & $\mathbf{42} \textbf{s}$      \\ 
E-NGD  & $8.6\cdot 10^{-2} \textrm{s}$  & $43 \textrm{s}$      \\ 
BFGS   & $7.2 \textrm{s}$               & $1 \text{h }$      \\ 
\hline
\end{tabular}
\caption{\mzz Computational time for the optimizers 
for the nonlinear example. For the time per iteration we averaged over 100 iterations. The full optimization time is calculated as the product of total iteration count and averaged iteration time. The experiments were conducted on a single NVIDIA RTX 3080 Laptop GPU. \eee}\label{table:nonlinear_runtimes}
\end{center}
\end{table}
In this example, the Hilbert  NG descent is similarly efficient.  
Note that the energy inner product and the Hilbert space inner product  are very similar in this case. 
Again, Adam and standard gradient descent saturate early with much higher errors than the natural gradient methods.
We observe that obtaining high accuracy with the Deep Ritz method requires a highly accurate integration, which is why we used a comparatively fine grid and trapezoidal integration.

\section{Conclusion%
}
We propose to train physics informed neural networks with energy natural gradients, which correspond to the well-known concept of natural gradients combined with the geometric information of the Hessian in function space.
We show that the energy natural gradient update direction
corresponds to the Newton direction 
in function space, modulo an orthogonal projection onto the tangent space of the model. 
We demonstrate experimentally that this optimization achieves highly accurate PINN solutions, well beyond the the accuracy that can be obtained with standard optimizers \mzz
even if these methods are allowed several order of magnitude more computation time.
The proposed method is compatible with arbitrary discretizations of the integrals appearing in the objective and the gram matrix as with arbitrary network architectures.
Important future directions include the development of efficient implementations of energy natural gradients for large scale problems and the development of specialized initialization schemes.
\eee

\section*{Acknowledgements}
JM was supported by the Evangelisches Studienwerk e.V. (Villigst), the International Max Planck Research School for Mathematics in the Sciences (IMPRS MiS) and the European Research Council (ERC) under the EuropeanUnion’s Horizon 2020 research and innovation programme (grant number 757983). MZ acknowledges support from the Research Council of Norway (grant number 303362). 

\bibliography{bib}

\begin{thebibliography}{53}
\providecommand{\natexlab}[1]{#1}
\providecommand{\url}[1]{\texttt{#1}}
\expandafter\ifx\csname urlstyle\endcsname\relax
  \providecommand{\doi}[1]{doi: #1}\else
  \providecommand{\doi}{doi: \begingroup \urlstyle{rm}\Url}\fi

\bibitem[Amari(1998)]{amari1998natural}
Amari, S.
\newblock Natural gradient works efficiently in learning.
\newblock \emph{Neural computation}, 10\penalty0 (2):\penalty0 251--276, 1998.

\bibitem[Amari(2016)]{amari2016information}
Amari, S.
\newblock \emph{Information geometry and its applications}, volume 194.
\newblock Springer, Japan, 2016.

\bibitem[Amari \& Cichocki(2010)Amari and Cichocki]{amari2010information}
Amari, S. and Cichocki, A.
\newblock Information geometry of divergence functions.
\newblock \emph{Bulletin of the {P}olish academy of sciences. Technical
  sciences}, 58\penalty0 (1):\penalty0 183--195, 2010.

\bibitem[Bagnell \& Schneider(2003)Bagnell and Schneider]{bagnell2003covariant}
Bagnell, J.~A. and Schneider, J.~G.
\newblock Covariant policy search.
\newblock In \emph{IJCAI}, pp.\  1019--1024, 2003.

\bibitem[Beck et~al.(2020)Beck, Hutzenthaler, Jentzen, and
  Kuckuck]{beck2020overview}
Beck, C., Hutzenthaler, M., Jentzen, A., and Kuckuck, B.
\newblock An overview on deep learning-based approximation methods for partial
  differential equations.
\newblock \emph{arXiv preprint arXiv:2012.12348}, 2020.

\bibitem[Bradbury et~al.(2018)Bradbury, Frostig, Hawkins, Johnson, Leary,
  Maclaurin, Necula, Paszke, Vander{P}las, Wanderman-{M}ilne, and
  Zhang]{jax2018github}
Bradbury, J., Frostig, R., Hawkins, P., Johnson, M.~J., Leary, C., Maclaurin,
  D., Necula, G., Paszke, A., Vander{P}las, J., Wanderman-{M}ilne, S., and
  Zhang, Q.
\newblock {JAX}: composable transformations of {P}ython+{N}um{P}y programs,
  2018.
\newblock URL \url{http://github.com/google/jax}.

\bibitem[Cai et~al.(2019)Cai, Gao, Hou, Chen, Wang, He, Zhang, and
  Wang]{cai2019gram}
Cai, T., Gao, R., Hou, J., Chen, S., Wang, D., He, D., Zhang, Z., and Wang, L.
\newblock Gram-gauss-newton method: Learning overparameterized neural networks
  for regression problems.
\newblock \emph{arXiv preprint arXiv:1905.11675}, 2019.

\bibitem[Courte \& Zeinhofer(2023)Courte and Zeinhofer]{courte2023robin}
Courte, L. and Zeinhofer, M.
\newblock Robin {Pre-Training for the Deep Ritz Method}.
\newblock \emph{Northern Lights Deep Learning Conference}, 2023.

\bibitem[Davi \& Braga-Neto(2022)Davi and Braga-Neto]{davi2022pso}
Davi, C. and Braga-Neto, U.
\newblock Pso-pinn: Physics-informed neural networks trained with particle
  swarm optimization.
\newblock \emph{arXiv preprint arXiv:2202.01943}, 2022.

\bibitem[Daw et~al.(2022)Daw, Bu, Wang, Perdikaris, and
  Karpatne]{daw2022rethinking}
Daw, A., Bu, J., Wang, S., Perdikaris, P., and Karpatne, A.
\newblock Rethinking the importance of sampling in physics-informed neural
  networks.
\newblock \emph{arXiv preprint arXiv:2207.02338}, 2022.

\bibitem[Dissanayake \& Phan-Thien(1994)Dissanayake and
  Phan-Thien]{dissanayake1994neural}
Dissanayake, M. and Phan-Thien, N.
\newblock Neural-network-based approximations for solving partial differential
  equations.
\newblock \emph{communications in Numerical Methods in Engineering},
  10\penalty0 (3):\penalty0 195--201, 1994.

\bibitem[E \& Yu(2018)E and Yu]{weinan2018deep}
E, W. and Yu, B.
\newblock {The Deep Ritz Method: A Deep Learning-Based Numerical Algorithm for
  Solving Variational Problems}.
\newblock \emph{Communications in Mathematics and Statistics}, 6\penalty0
  (1):\penalty0 1--12, 2018.

\bibitem[E et~al.(2017)E, Han, and Jentzen]{weinan2017deep}
E, W., Han, J., and Jentzen, A.
\newblock Deep learning-based numerical methods for high-dimensional parabolic
  partial differential equations and backward stochastic differential
  equations.
\newblock \emph{Communications in Mathematics and Statistics}, 5\penalty0
  (4):\penalty0 349--380, 2017.

\bibitem[Gargiani et~al.(2020)Gargiani, Zanelli, Diehl, and
  Hutter]{gargiani2020promise}
Gargiani, M., Zanelli, A., Diehl, M., and Hutter, F.
\newblock On the promise of the stochastic generalized gauss-newton method for
  training dnns.
\newblock \emph{arXiv preprint arXiv:2006.02409}, 2020.

\bibitem[Han et~al.(2018)Han, Jentzen, and Weinan]{han2018solving}
Han, J., Jentzen, A., and Weinan, E.
\newblock Solving high-dimensional partial differential equations using deep
  learning.
\newblock \emph{Proceedings of the National Academy of Sciences}, 115\penalty0
  (34):\penalty0 8505--8510, 2018.

\bibitem[Hao et~al.(2021)Hao, Jin, Siegel, and Xu]{hao2021efficient}
Hao, W., Jin, X., Siegel, J.~W., and Xu, J.
\newblock An efficient greedy training algorithm for neural networks and
  applications in {PDEs}.
\newblock \emph{arXiv preprint arXiv:2107.04466}, 2021.

\bibitem[Kakade(2001)]{kakade2001natural}
Kakade, S.~M.
\newblock A natural policy gradient.
\newblock \emph{Advances in Neural Information Processing Systems}, 14, 2001.

\bibitem[Kovachki et~al.(2021)Kovachki, Li, Liu, Azizzadenesheli, Bhattacharya,
  Stuart, and Anandkumar]{kovachki2021neural}
Kovachki, N., Li, Z., Liu, B., Azizzadenesheli, K., Bhattacharya, K., Stuart,
  A., and Anandkumar, A.
\newblock Neural operator: Learning maps between function spaces.
\newblock \emph{arXiv preprint arXiv:2108.08481}, 2021.

\bibitem[Krishnapriyan et~al.(2021)Krishnapriyan, Gholami, Zhe, Kirby, and
  Mahoney]{krishnapriyan2021characterizing}
Krishnapriyan, A., Gholami, A., Zhe, S., Kirby, R., and Mahoney, M.~W.
\newblock Characterizing possible failure modes in physics-informed neural
  networks.
\newblock \emph{Advances in Neural Information Processing Systems},
  34:\penalty0 26548--26560, 2021.

\bibitem[Lagaris et~al.(1998)Lagaris, Likas, and
  Fotiadis]{lagaris1998artificial}
Lagaris, I.~E., Likas, A., and Fotiadis, D.~I.
\newblock Artificial neural networks for solving ordinary and partial
  differential equations.
\newblock \emph{IEEE transactions on neural networks}, 9\penalty0 (5):\penalty0
  987--1000, 1998.

\bibitem[Li \& Mont{\'u}far(2018)Li and Mont{\'u}far]{li2018natural}
Li, W. and Mont{\'u}far, G.
\newblock Natural gradient via optimal transport.
\newblock \emph{Information Geometry}, 1\penalty0 (2):\penalty0 181--214, 2018.
\newblock URL \url{https://doi.org/10.1007/s41884-018-0015-3}.

\bibitem[Li et~al.(2021)Li, Kovachki, Azizzadenesheli, liu, Bhattacharya,
  Stuart, and Anandkumar]{li2021fourier}
Li, Z., Kovachki, N.~B., Azizzadenesheli, K., liu, B., Bhattacharya, K.,
  Stuart, A., and Anandkumar, A.
\newblock Fourier neural operator for parametric partial differential
  equations.
\newblock In \emph{International Conference on Learning Representations}, 2021.
\newblock URL \url{https://openreview.net/forum?id=c8P9NQVtmnO}.

\bibitem[Lin et~al.(2021)Lin, Li, Osher, and Mont{\'u}far]{lin2021wasserstein}
Lin, A.~T., Li, W., Osher, S., and Mont{\'u}far, G.
\newblock Wasserstein proximal of gans.
\newblock In \emph{International Conference on Geometric Science of
  Information}, pp.\  524--533. Springer, 2021.

\bibitem[Lu et~al.(2021)Lu, Meng, Mao, and Karniadakis]{lu2021deepxde}
Lu, L., Meng, X., Mao, Z., and Karniadakis, G.~E.
\newblock Deepxde: A deep learning library for solving differential equations.
\newblock \emph{SIAM Review}, 63\penalty0 (1):\penalty0 208--228, 2021.

\bibitem[Martens(2020)]{martens2020new}
Martens, J.
\newblock New insights and perspectives on the natural gradient method.
\newblock \emph{The Journal of Machine Learning Research}, 21\penalty0
  (1):\penalty0 5776--5851, 2020.

\bibitem[Morimura et~al.(2008)Morimura, Uchibe, Yoshimoto, and
  Doya]{morimura2008new}
Morimura, T., Uchibe, E., Yoshimoto, J., and Doya, K.
\newblock A new natural policy gradient by stationary distribution metric.
\newblock In \emph{Joint European Conference on Machine Learning and Knowledge
  Discovery in Databases}, pp.\  82--97. Springer, 2008.

\bibitem[M\"uller \& Mont\'ufar(2022)M\"uller and
  Mont\'ufar]{Mueller2022Convergence}
M\"uller, J. and Mont\'ufar, G.
\newblock Geometry and convergence of natural policy gradients.
\newblock \emph{MPI MiS Preprint 31/2022}, 2022.
\newblock URL
  \url{https://www.mis.mpg.de/publications/preprints/2022/prepr2022-31.html}.

\bibitem[M{\"u}ller \& Zeinhofer(2022{\natexlab{a}})M{\"u}ller and
  Zeinhofer]{muller2022error}
M{\"u}ller, J. and Zeinhofer, M.
\newblock Error estimates for the deep ritz method with boundary penalty.
\newblock In \emph{Mathematical and Scientific Machine Learning}, pp.\
  215--230. PMLR, 2022{\natexlab{a}}.

\bibitem[M{\"u}ller \& Zeinhofer(2022{\natexlab{b}})M{\"u}ller and
  Zeinhofer]{muller2022notes}
M{\"u}ller, J. and Zeinhofer, M.
\newblock Notes on exact boundary values in residual minimisation.
\newblock In \emph{Mathematical and Scientific Machine Learning}, pp.\
  231--240. PMLR, 2022{\natexlab{b}}.

\bibitem[Nabian et~al.(2021)Nabian, Gladstone, and
  Meidani]{nabian2021efficient}
Nabian, M.~A., Gladstone, R.~J., and Meidani, H.
\newblock Efficient training of physics-informed neural networks via importance
  sampling.
\newblock \emph{Computer-Aided Civil and Infrastructure Engineering},
  36\penalty0 (8):\penalty0 962--977, 2021.

\bibitem[Nocedal \& Wright(1999)Nocedal and Wright]{nocedal1999numerical}
Nocedal, J. and Wright, S.~J.
\newblock \emph{Numerical optimization}.
\newblock Springer, 1999.

\bibitem[Nurbekyan et~al.(2022)Nurbekyan, Lei, and
  Yang]{nurbekyan2022efficient}
Nurbekyan, L., Lei, W., and Yang, Y.
\newblock Efficient natural gradient descent methods for large-scale
  optimization problems.
\newblock \emph{arXiv:2202.06236}, 2022.

\bibitem[Pascanu \& Bengio(2014)Pascanu and Bengio]{pascanu2014revisiting}
Pascanu, R. and Bengio, Y.
\newblock Revisiting natural gradient for deep networks.
\newblock In \emph{International Conference on Learning Representations}, 2014.
\newblock URL \url{https://openreview.net/forum?id=vz8AumxkAfz5U}.

\bibitem[Peters et~al.(2003)Peters, Vijayakumar, and
  Schaal]{peters2003reinforcement}
Peters, J., Vijayakumar, S., and Schaal, S.
\newblock Reinforcement learning for humanoid robotics.
\newblock In \emph{Proceedings of the third IEEE-RAS international conference
  on humanoid robots}, pp.\  1--20, 2003.

\bibitem[Raissi et~al.(2019)Raissi, Perdikaris, and
  Karniadakis]{raissi2019physics}
Raissi, M., Perdikaris, P., and Karniadakis, G.~E.
\newblock Physics-informed neural networks: A deep learning framework for
  solving forward and inverse problems involving nonlinear partial differential
  equations.
\newblock \emph{Journal of Computational physics}, 378:\penalty0 686--707,
  2019.

\bibitem[Ren \& Goldfarb(2019)Ren and Goldfarb]{ren2019efficient}
Ren, Y. and Goldfarb, D.
\newblock Efficient subsampled gauss-newton and natural gradient methods for
  training neural networks.
\newblock \emph{arXiv preprint arXiv:1906.02353}, 2019.

\bibitem[Ritz(1909)]{ritz1909neue}
Ritz, W.
\newblock {{\"U}ber eine neue Methode zur L{\"o}sung gewisser
  Variationsprobleme der mathematischen Physik.}
\newblock \emph{Journal f{\"u}r die reine und angewandte Mathematik (Crelles
  Journal)}, 1909\penalty0 (135):\penalty0 1--61, 1909.

\bibitem[Schraudolph(2002)]{schraudolph2002fast}
Schraudolph, N.~N.
\newblock Fast curvature matrix-vector products for second-order gradient
  descent.
\newblock \emph{Neural computation}, 14\penalty0 (7):\penalty0 1723--1738,
  2002.

\bibitem[Schwedes et~al.(2016)Schwedes, Funke, and Ham]{schwedes2016iteration}
Schwedes, T., Funke, S.~W., and Ham, D.~A.
\newblock An iteration count estimate for a mesh-dependent steepest descent
  method based on finite elements and {R}iesz inner product representation.
\newblock \emph{arXiv preprint arXiv:1606.08069}, 2016.

\bibitem[Schwedes et~al.(2017)Schwedes, Ham, Funke, and
  Piggott]{schwedes2017mesh}
Schwedes, T., Ham, D.~A., Funke, S.~W., and Piggott, M.~D.
\newblock Mesh dependence in {PDE}-constrained optimisation.
\newblock In \emph{Mesh Dependence in {PDE}-Constrained Optimisation}, pp.\
  53--78. Springer, 2017.

\bibitem[Shen et~al.(2020)Shen, Wang, Ribeiro, and Hassani]{shen2020sinkhorn}
Shen, Z., Wang, Z., Ribeiro, A., and Hassani, H.
\newblock Sinkhorn natural gradient for generative models.
\newblock \emph{Advances in Neural Information Processing Systems},
  33:\penalty0 1646--1656, 2020.

\bibitem[Sirignano \& Spiliopoulos(2018)Sirignano and
  Spiliopoulos]{sirignano2018dgm}
Sirignano, J. and Spiliopoulos, K.
\newblock {DGM: A deep learning algorithm for solving partial differential
  equations}.
\newblock \emph{Journal of computational physics}, 375:\penalty0 1339--1364,
  2018.

\bibitem[Thomas et~al.(2016)Thomas, Silva, Dann, and
  Brunskill]{thomas2016energetic}
Thomas, P., Silva, B.~C., Dann, C., and Brunskill, E.
\newblock Energetic natural gradient descent.
\newblock In \emph{International Conference on Machine Learning}, pp.\
  2887--2895. PMLR, 2016.

\bibitem[van~der Meer et~al.(2022)van~der Meer, Oosterlee, and
  Borovykh]{van2022optimally}
van~der Meer, R., Oosterlee, C.~W., and Borovykh, A.
\newblock Optimally weighted loss functions for solving pdes with neural
  networks.
\newblock \emph{Journal of Computational and Applied Mathematics},
  405:\penalty0 113887, 2022.

\bibitem[van Oostrum et~al.(2022)van Oostrum, M{\"u}ller, and
  Ay]{van2022invariance}
van Oostrum, J., M{\"u}ller, J., and Ay, N.
\newblock Invariance properties of the natural gradient in overparametrised
  systems.
\newblock \emph{Information Geometry}, pp.\  1--17, 2022.

\bibitem[Wang \& Yan(2022)Wang and Yan]{wang2022hessian}
Wang, L. and Yan, M.
\newblock Hessian informed mirror descent.
\newblock \emph{Journal of Scientific Computing}, 92\penalty0 (3):\penalty0
  1--22, 2022.

\bibitem[Wang et~al.(2021)Wang, Teng, and Perdikaris]{wang2021understanding}
Wang, S., Teng, Y., and Perdikaris, P.
\newblock Understanding and mitigating gradient flow pathologies in
  physics-informed neural networks.
\newblock \emph{SIAM Journal on Scientific Computing}, 43\penalty0
  (5):\penalty0 A3055--A3081, 2021.

\bibitem[Wang et~al.(2022{\natexlab{a}})Wang, Sankaran, and
  Perdikaris]{wang2022respecting}
Wang, S., Sankaran, S., and Perdikaris, P.
\newblock Respecting causality is all you need for training physics-informed
  neural networks.
\newblock \emph{arXiv preprint arXiv:2203.07404}, 2022{\natexlab{a}}.

\bibitem[Wang et~al.(2022{\natexlab{b}})Wang, Yu, and Perdikaris]{wang2022and}
Wang, S., Yu, X., and Perdikaris, P.
\newblock When and why {PINNs} fail to train: A neural tangent kernel
  perspective.
\newblock \emph{Journal of Computational Physics}, 449:\penalty0 110768,
  2022{\natexlab{b}}.

\bibitem[Weinan et~al.(2021)Weinan, Han, and Jentzen]{weinan2021algorithms}
Weinan, E., Han, J., and Jentzen, A.
\newblock Algorithms for solving high dimensional {PDEs}: from nonlinear monte
  carlo to machine learning.
\newblock \emph{Nonlinearity}, 35\penalty0 (1):\penalty0 278, 2021.

\bibitem[Wu et~al.(2023)Wu, Zhu, Tan, Kartha, and Lu]{wu2023comprehensive}
Wu, C., Zhu, M., Tan, Q., Kartha, Y., and Lu, L.
\newblock A comprehensive study of non-adaptive and residual-based adaptive
  sampling for physics-informed neural networks.
\newblock \emph{Computer Methods in Applied Mechanics and Engineering},
  403:\penalty0 115671, 2023.

\bibitem[Zapf et~al.(2022)Zapf, Haubner, Kuchta, Ringstad, Eide, and
  Mardal]{zapf2022investigating}
Zapf, B., Haubner, J., Kuchta, M., Ringstad, G., Eide, P.~K., and Mardal, K.-A.
\newblock Investigating molecular transport in the human brain from mri with
  physics-informed neural networks.
\newblock \emph{Scientific Reports}, 12\penalty0 (1):\penalty0 1--12, 2022.

\bibitem[Zeng et~al.(2022)Zeng, Bryngelson, and Schaefer]{zeng2022competitive}
Zeng, Q., Bryngelson, S.~H., and Schaefer, F.~T.
\newblock Competitive physics informed networks.
\newblock In \emph{ICLR 2022 Workshop on Gamification and Multiagent
  Solutions}, 2022.
\newblock URL \url{https://openreview.net/forum?id=rMz_scJ6lc}.

\end{thebibliography}
\bibliographystyle{icml2023}

\clearpage

\appendix

\section{Proofs Regarding NGs in Function Space}\label{sec:proofs}

We follow an analogue approach to~\cite{van2022invariance}, which considers finite dimensional spaces. 

\begin{lemma}\label{app:lem}
Let $X$ be a vector space with a scalar product $\langle\cdot, \cdot\rangle\colon X\times X\to\mathbb R$ and consider a linear map $A\colon\mathbb R^p\to X$ for some $p\in\mathbb N$. Let $G\in\mathbb R^{p\times p}$ be given by $G_{ij}\coloneqq \langle Ae_i, Ae_j\rangle$ and consider the adjoint operator $A^\ast\colon X\to\mathbb R^p$  given by 
\begin{equation}
    A^\ast y \coloneqq \sum_{i=1}^p \langle y, Ae_i \rangle e_i.
\end{equation}
Then it holds that
\begin{equation}
    AG^+A^\ast x = \Pi_{R(A)}(x),
\end{equation}
where $\Pi_{R(A)}(x)$ denotes the projection of $x$ onto the range $R(A)=\{Av:v\in\mathbb R^p\}$ of $A$, which is the unique element satisfying
\begin{equation}
    \langle \Pi_{R(A)}(x), z\rangle = \langle x, z\rangle \quad \text{for all } z\in R(A).
\end{equation}
\end{lemma}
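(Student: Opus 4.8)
The plan is to verify directly that $AG^+A^\ast x$ satisfies the defining property of the orthogonal projection $\Pi_{R(A)}(x)$. First I would observe that $R(A)$ is finite dimensional, being spanned by the $p$ vectors $Ae_1,\dots,Ae_p$, hence closed, so the orthogonal projection onto it exists and is the unique element of $R(A)$ obeying the stated variational identity. The two elementary facts I would record at the outset are: (i) that $G$ is exactly the matrix of the operator $A^\ast A$ on $\mathbb{R}^p$, since $A^\ast A e_j = \sum_i \langle Ae_j, Ae_i\rangle e_i = \sum_i G_{ij}\, e_i$; and (ii) the adjoint relation $\langle Au, y\rangle_X = \langle u, A^\ast y\rangle_{\mathbb{R}^p}$ for all $u\in\mathbb{R}^p$ and $y\in X$, which follows immediately by expanding $Au = \sum_i u_i Ae_i$ and inserting the definition of $A^\ast$.

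Next I would check the two conditions characterizing the projection. Membership $AG^+A^\ast x\in R(A)$ is immediate, as the vector is $A$ applied to $G^+A^\ast x\in\mathbb{R}^p$. For the orthogonality it suffices to test against $z = Av$ for arbitrary $v\in\mathbb{R}^p$, since these span $R(A)$. Using the adjoint relation twice, together with $A^\ast A = G$ and the symmetry of $G$, I would rewrite $\langle AG^+A^\ast x, Av\rangle_X = \langle G^+A^\ast x, Gv\rangle_{\mathbb{R}^p} = \langle GG^+A^\ast x, v\rangle_{\mathbb{R}^p}$, while the target right-hand side is $\langle x, Av\rangle_X = \langle A^\ast x, v\rangle_{\mathbb{R}^p}$. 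Hence the entire statement reduces to the single identity $GG^+A^\ast x = A^\ast x$.

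The crux, and the only genuinely non-formal step, is establishing $GG^+A^\ast x = A^\ast x$, i.e. that $GG^+$ acts as the identity on $A^\ast x$. Since $G$ is symmetric positive semidefinite, $GG^+$ is the orthogonal projection of $\mathbb{R}^p$ onto $R(G)$, so it fixes precisely the vectors of $R(G)$; thus it remains to show $A^\ast x\in R(G)$. Here I would exploit that everything lives in the finite-dimensional space $\mathbb{R}^p$: the kernel identity $\ker(A^\ast A) = \ker A$, valid in any inner product space via $\langle A^\ast A u, u\rangle = \lVert Au\rVert^2$, gives $R(G) = (\ker G)^\perp = (\ker A)^\perp$, while $R(A^\ast)^\perp = \ker A$ yields $R(A^\ast) = (\ker A)^\perp$ once one uses that $R(A^\ast)\subseteq\mathbb{R}^p$ is automatically closed. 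Therefore $R(G) = R(A^\ast)$, so $A^\ast x\in R(G)$ and the desired identity holds, completing the argument. I expect this membership step to be where care is genuinely required, as it is the only place where the finite dimension of the parameter space $\mathbb{R}^p$ is essential and where the pseudoinverse truly enters; the rest is bookkeeping with the adjoint relation.
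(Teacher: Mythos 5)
Your proof is correct, and it reaches the conclusion by a genuinely different route at the decisive step. Both arguments verify the variational characterization of the projection: membership $AG^+A^\ast x\in R(A)$ is immediate, and testing against $z=Av$ via the adjoint relation reduces everything to a statement about the pseudoinverse. At that point the paper argues in coordinates: it first reduces without loss of generality to $x\in R(A)$ (since $A^\ast$ vanishes on $R(A)^\perp$), expands $x=\sum_j a_j Ae_j$ so that $A^\ast x=\sum_j a_j Ge_j$ visibly lies in $R(G)$, and finishes with the identity $GG^+G=G$. You instead work at the operator level for arbitrary $x$: you isolate the single identity $GG^+A^\ast x=A^\ast x$ and prove it by the range--kernel duality $R(A^\ast)=(\ker A)^\perp=(\ker G)^\perp=R(G)$, which dispenses with the WLOG reduction and makes explicit exactly where the finite dimension of $\mathbb{R}^p$ enters. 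One refinement is advisable: the paper's convention is that $G^+$ denotes \emph{any} pseudo inverse, i.e.\ any matrix satisfying $GG^+G=G$, and for such a generalized inverse $GG^+$ is in general only an oblique (not orthogonal) projection onto $R(G)$; however, your argument only uses that $GG^+$ fixes $R(G)$ pointwise, which already follows from $GG^+G=G$ alone (if $u=Gw$ then $GG^+u=GG^+Gw=Gw=u$), so your proof survives verbatim once that claim is weakened. A second, cosmetic point: since $X$ is not assumed complete, closedness of $R(A)$ by itself would not guarantee that the orthogonal projection exists; what guarantees it is precisely the finite dimensionality of $R(A)$ (Gram--Schmidt), which is also how the paper phrases it.
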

\begin{proof}
It is elementary to check that the adjoint satisfies $\langle A^\ast x,v\rangle = \langle x, Av\rangle$. 
Picking some orthonormal basis $(b_i)_{i =1, \dots, d}$ of $R(A)$, the orthogonal projection of $x\in X$ to $R(A)$ exists and is given by $\sum_i \langle x, b_i\rangle b_i$. 
Without loss of generality we can assume $x\in R(A)$ and otherwise replace $x$ by its projection onto $R(A)$ since $A^\ast$ vanishes on $R(A)^\perp$.

Let us use the notation $v_i\coloneqq Ae_i$. 
Note that clearly $AG^+A^\ast x\in R(A)$. Hence, it remains to show that $\langle AG^+A^\ast x, v_i\rangle = \langle x, v_i\rangle$ for all $i=1, \dots, p$. It holds that $A^\ast v_i = \sum_j \langle Ae_i, Ae_j\rangle e_j = Ge_i$ and we can express $x = \sum_i a_i v_i$. Using the symmetry of $G$ we can compute
\begin{align}
    \begin{split}
            \langle AG^+A^\ast x, v_i\rangle & = \langle G^+A^\ast x, A^\ast v_i\rangle
            \\ & = \sum_{j} a_j \langle G^+ A^\ast v_j, Ge_i\rangle 
            \\ & = \sum_{j} a_j \langle G G^+ Ge_j, e_i\rangle 
            \\ & = \sum_{j} a_j \langle Ge_j, e_i\rangle 
            \\ & = \sum_{j} a_j \langle A^\ast v_j, e_i\rangle 
            \\ & = \sum_{j} a_j \langle v_j, A e_i\rangle 
            \\ & = \langle x, v_i\rangle,
    \end{split}
\end{align}
which completes the proof.
\end{proof}

\begin{theorem}[NG for Hilbert Manifolds]\label{app:thm:NG}
Let $(\mathcal M, g)$ be a Riemannian Hilbert manifold with model space $X$, where for any $x\in\mathcal M$ the Riemannian metric $g_x$ defines a scalar product on the tangent space $T_x\mathcal M\cong X$ rendering $T_x\mathcal M$ complete. Assume a differentiable objective function $E\colon\mathcal M\to\mathbb R$ and a differentiable parametrization $P\colon\mathbb R^p\to\mathcal M$ and define the Gram matrix in the usual way $G(\theta)_{ij}\coloneqq g_{P(\theta)}(\partial_{\theta_i}P(\theta), \partial_{\theta_j}P(\theta))$ \mzz and consider the objective function $L\colon\mathbb R^p\to\mathbb R, \theta\mapsto E(u_\theta)$.\eee
Then it holds that
\begin{align}
    DP_\theta G(\theta)^+\nabla L(\theta) = \Pi_{T_\theta P(\mathbb R^p)}\nabla E(P(\theta)).
\end{align}
\end{theorem}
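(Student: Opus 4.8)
The plan is to deduce Theorem~\ref{app:thm:NG} directly from Lemma~\ref{app:lem} by instantiating the latter at the fixed point $x = P(\theta)$. Concretely, I would take the inner-product space of Lemma~\ref{app:lem} to be the tangent space $T_x\mathcal M \cong X$ equipped with the scalar product $g_x$, and the linear map to be $A \coloneqq DP_\theta \colon \mathbb R^p \to T_x\mathcal M$, so that $Ae_i = \partial_{\theta_i}P(\theta)$. With this identification the Gram matrix from the lemma, $\langle Ae_i, Ae_j\rangle = g_x(\partial_{\theta_i}P(\theta), \partial_{\theta_j}P(\theta))$, coincides with $G(\theta)_{ij}$, and the range $R(A)$ is exactly the generalized tangent space $T_\theta P(\mathbb R^p)$. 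The completeness assumption on $T_x\mathcal M$ enters here: via the Riesz representation theorem it guarantees the existence of the Riemannian gradient $\nabla E(P(\theta))$ as the unique element satisfying $g_x(\nabla E(P(\theta)), v) = DE_{P(\theta)}(v)$ for all $v\in T_x\mathcal M$, while the orthogonal projection onto the finite-dimensional subspace $R(A)$ exists unconditionally.

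The key step is to show that the adjoint $A^\ast$ carries the Riemannian gradient of $E$ to the Euclidean gradient of $L$. Recalling that $A^\ast y = \sum_i g_x(y, Ae_i)\,e_i$, I would apply the chain rule to $L = E\circ P$ to obtain, for every index $i$,
\[
(\nabla L(\theta))_i = DL_\theta(e_i) = DE_{P(\theta)}(DP_\theta e_i) = DE_{P(\theta)}(Ae_i) = g_x(\nabla E(P(\theta)), Ae_i),
\]
where the final equality is the defining property of the Riemannian gradient. Reading off components, this is precisely the identity $A^\ast\nabla E(P(\theta)) = \nabla L(\theta)$.

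With this in hand the theorem follows immediately. Lemma~\ref{app:lem} gives $AG(\theta)^+A^\ast x = \Pi_{R(A)}(x)$ for every $x\in X$; choosing $x = \nabla E(P(\theta))$, substituting the identity $A^\ast\nabla E(P(\theta)) = \nabla L(\theta)$, and using $A = DP_\theta$ together with $R(A) = T_\theta P(\mathbb R^p)$ yields
\[
DP_\theta\, G(\theta)^+\nabla L(\theta) = \Pi_{T_\theta P(\mathbb R^p)}\big(\nabla E(P(\theta))\big),
\]
which is the claim.

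The main obstacle — indeed essentially the only nontrivial point — is the chain-rule identification $A^\ast\nabla E = \nabla L$, which rests on carefully matching two \emph{different} gradients: the Euclidean gradient $\nabla L(\theta)$ defined through the standard inner product on $\mathbb R^p$, and the Riemannian gradient $\nabla E$ defined through $g_x$. Everything else is bookkeeping, namely verifying that the abstract objects of Lemma~\ref{app:lem} specialize correctly and invoking completeness of $T_x\mathcal M$ so that $\nabla E(P(\theta))$ is well defined to begin with.
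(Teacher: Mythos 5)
Your proposal is correct and follows essentially the same route as the paper's own proof: both instantiate Lemma~\ref{app:lem} with $A = DP_\theta$ acting into the tangent space $T_{P(\theta)}\mathcal M$ equipped with $g_{P(\theta)}$, and both hinge on the chain-rule identity $\nabla L(\theta) = DP_\theta^\ast\,\nabla E(P(\theta))$, which the paper merely asserts and you verify componentwise. Your write-up is simply a more detailed version of the same argument.
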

\begin{proof}
This follows directly from Lemma~\ref{app:lem} by setting $X\coloneqq T_{P(\theta)}\mathcal M$ and $A=DP_\theta$, where by the gradient chain rule it holds that $\nabla L(\theta) = DP(\theta)^\ast \nabla E(u_\theta)$. 
\end{proof}

\ENGFunctionSpace*
\begin{proof}
The case of strongly convex energy $E$ is a falls into the setting of Theorem~\ref{app:thm:NG} by defining the Riemannian metric via $g_u\coloneqq DE^2(u)$. It remains to show that the Riemannian gradient with respect to the metric induced by the second derivative $D^2E$ is given by $D^2E(u)^{-1}\nabla E(u)$. This follows from 
\begin{align}
    \begin{split}
        D^2E(u)(D^2E(u)^{-1}\nabla E(u), v) = \langle \nabla E(u), v \rangle = DE(u)v.
    \end{split}
\end{align}

Consider now the case of a symmetric quadratic function $E$ with positiv definite second derivative $D^2E$ and assume that $E$ admits a unique minimizer $u^\ast\in X$. 
Lemma~\ref{app:lem} with $A = DP_\theta$ implies 
\begin{align}
    \begin{split}
        DP_\theta G(\theta)^+ DP_\theta^{\ast, a} (u-u^\ast) = \Pi_{T_\theta \mathcal F_\Theta} (u-u^\ast), 
    \end{split}
\end{align}
where $DP_\theta^{\ast, a}$ denotes the adjoint of $DP_\theta$ with respect to the inner product $a$. 
Hence, it remains to show $\nabla L(\theta) = DP_\theta^{\ast, a} (u-u^\ast)$. 
Note that $E(u) = \frac12a(u-u^\ast, u-u^\ast) + c$ for a suitable constant $c\in\mathbb R$. 
This follows from the computation
\begin{align}
    \begin{split}
        \langle DP_\theta^{\ast, a} (u-u^\ast), e_i \rangle_{\mathbb R^p} & = a(u_\theta - u^\ast, DP_\theta e_i) 
        \\& = a(u_\theta - u^\ast, \partial_{\theta_i} u_\theta)
        \\ & = DE(u_\theta)\partial_{\theta_i}u_\theta \\ & = \partial_{\theta_i}L(\theta),
    \end{split}
\end{align}
where we used the chain rule in the last step. 
\end{proof}

\section{Additional Resources for the Experiments}
\subsection{Relative $H^1$ Errors}
For completeness sake, we report the $H^1$ errors of the three experiments in the main part of the manuscript.

\paragraph{Relative $H^1$ errors after training}
First, we present the relative $H^1$ errors obtained at the end of training in Tables~\ref{table:poisson_h1}-\ref{table:nonlinear_h1}.

\begin{table}[h!]
\begin{center}
\begin{tabular}{|c|c|c|c|}
\hline
& Median & Minimum & Maximum  \\ \hline
GD & $9.3\cdot10^{-2}$  & $2.6\cdot10^{-2}$ & $1.4\cdot10^{-1}$   \\
Adam & $1.5\cdot10^{-2}$  & $9.1\cdot10^{-3}$  & $2.5\cdot10^{-2}$  \\
H-NGD & $7.0$  & $5.2$  & $12.4$  \\
E-NGD & $\mathbf{4.9\cdot10^{-6}}$  & $\mathbf{2.6\cdot10^{-6}}$ & $\mathbf{1.0\cdot10^{-5}}$  \\
BFGS & $6.0\cdot10^{-3}$  & $1.3\cdot10^{-3}$ & $1.7\cdot10^{-2}$  \\
\hline
\end{tabular}
\caption{\mzz Median, minimum and maximum of the relative $H^1$ errors for the Poisson equation achieved by different optimizers over $10$ different initializations. Here, H-NGD, E-NGD and BFGS is run for $500$ and the other methods for $2\cdot10^5$ iterations. \eee}
\label{table:poisson_h1}
\end{center}
\end{table}

\begin{table}[h!]
\begin{center}
\begin{tabular}{|c|c|c|c|}
\hline
& Median & Minimum & Maximum  \\ \hline
GD & $1.8\cdot10^{-1}$  & $7.4\cdot10^{-2}$ & $4.5\cdot10^{-1}$   \\
Adam & $1.7\cdot10^{-2}$  & $1.2\cdot10^{-2}$  & $2.5\cdot10^{-2}$  \\
H-NGD & $2.9$  & $2.6$  & $3.0$  \\
E-NGD & $\mathbf{1.8\cdot10^{-4}}$  & $\mathbf{8.2\cdot10^{-5}}$ & $3.1$  \\
BFGS & $2.8\cdot10^{-3}$  & $1.3\cdot10^{-3}$  & $\mathbf{6.7\cdot10^{-3}}$  \\
\hline
\end{tabular}
\caption{Median, minimum and maximum of the relative $H^1$ errors for the heat equation achieved by different optimizers over $10$ different initializations. Here, H-NGD, E-NGD and BFGS is run for $2000$ and the other methods for $2\cdot10^5$ iterations.}
\end{center}
\end{table}

\begin{table}[h!]
\begin{center}
\begin{tabular}{|c|c|c|c|}
\hline
& Median & Minimum & Maximum  \\ \hline
GD & $6.1\cdot10^{-3}$  & $3.6\cdot10^{-3}$ & $9.1\cdot10^{-3}$   \\
Adam & $1.8\cdot10^{-3}$  & $7.5\cdot10^{-4}$  & $3.7\cdot10^{-3}$  \\
H-NGD & $\mathbf{8.1\cdot10^{-7}}$  & $\mathbf{5.5\cdot10^{-7}}$  & $8.3$  \\
E-NGD & $9.3\cdot10^{-7}$  & $6.0\cdot10^{-7}$ & $8.3$  \\
BFGS & $2.4\cdot10^{-4}$  & $7.9\cdot10^{-5}$ & $\mathbf{4.1\cdot10^{-4}}$  \\
\hline
\end{tabular}
\caption{\mzz
Median, minimum and maximum of the relative $H^1$ errors for the nonlinear equation achieved by different optimizers over $10$ different initializations. Here, H-NGD, E-NGD and BFGS is run for $500$ and the other methods for $2\cdot10^5$ iterations. At the moment only a single initialization of BFGS is run. \eee}
\end{center}\label{table:nonlinear_h1}
\end{table}

\paragraph{Relative $H^1$ Errors During Training}
Furthermore, we provide also the visualizations of the training processes of the experiments of the main section when the error is measured in $H^1$ norm.

\begin{figure}
    \centering
    \includegraphics[width=\linewidth]{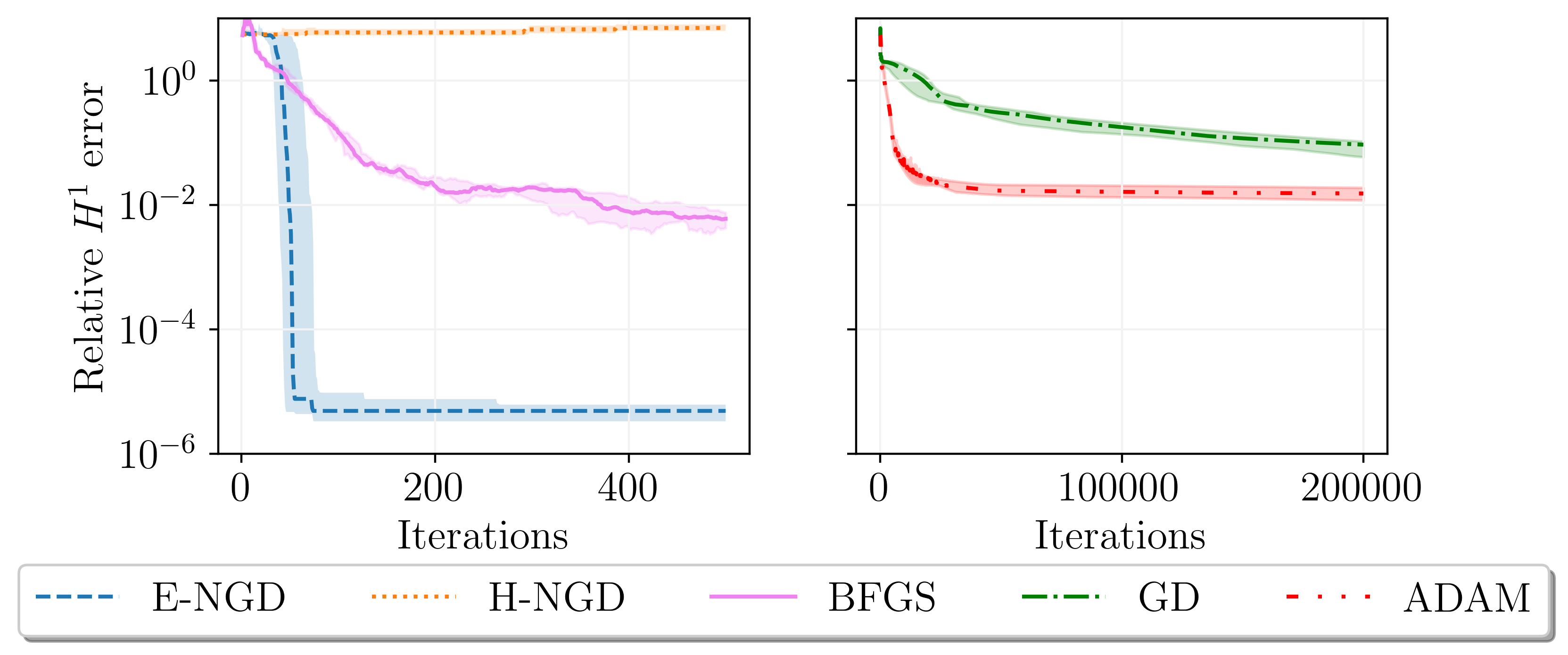}
    \vspace{-.5cm}
    \caption{\mzz The plot shows the median of the relative $H^1$ errors for the Poisson equation throughout the training process for the five optimizers: energy natural gradient descent, Hilbert natural gradient descent, BFGS, vanilla gradient descent and Adam. The shaded area displays the region between the first and third quartile of 10 runs for different initializations of the network's parameters. Note that GD and Adam are run for 400 times more iterations. \eee
    }
\end{figure}

\begin{figure}
    \centering
    \includegraphics[width=\linewidth]{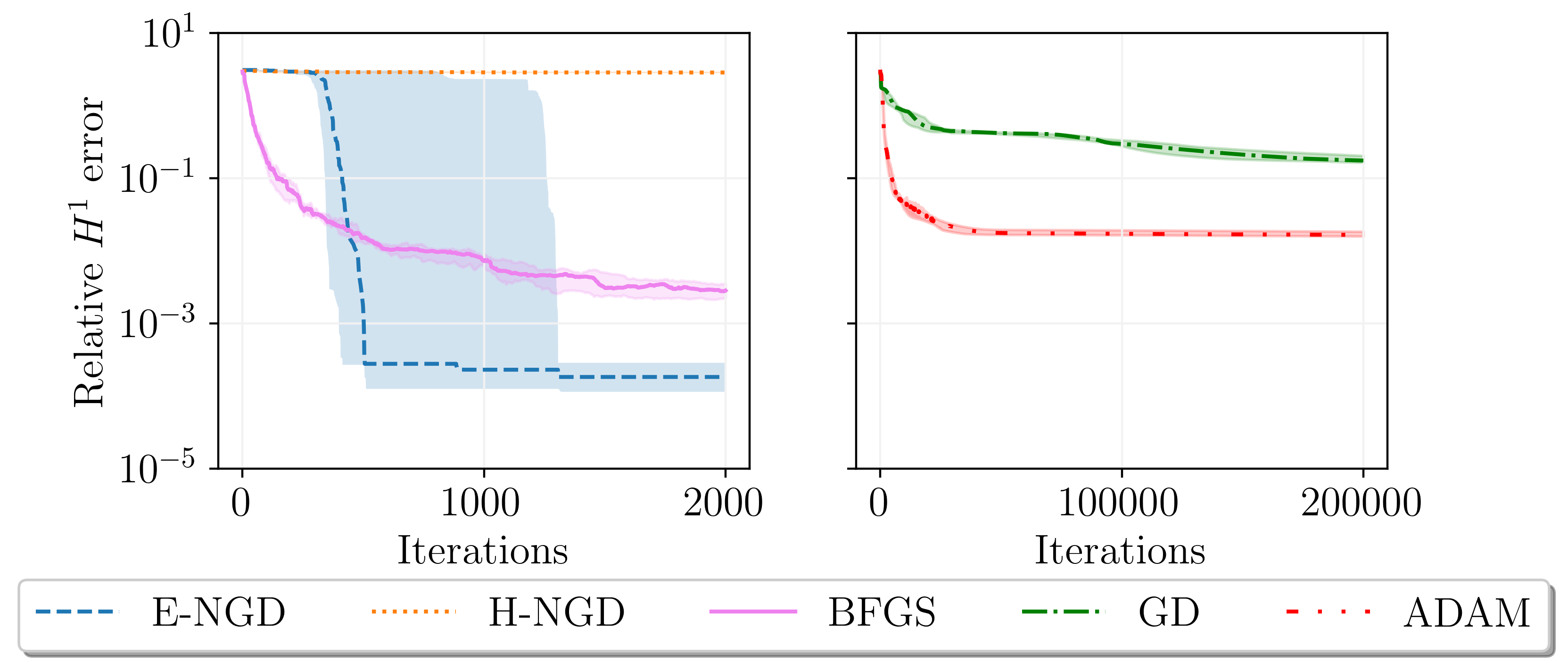}
    \vspace{-.5cm}
    \caption{\mzz The plot shows the median of the relative $H^1$ errors for the heat equation throughout the training process for the five optimizers: energy natural gradient descent, Hilbert natural gradient descent, BFGS, vanilla gradient descent and Adam. The shaded area displays the region between the first and third quartile of 10 runs for different initializations of the network's parameters. Note that GD and Adam are run for 100 times more iterations.\eee
    }
\end{figure}

\begin{figure}
    \centering
    \includegraphics[width=\linewidth]{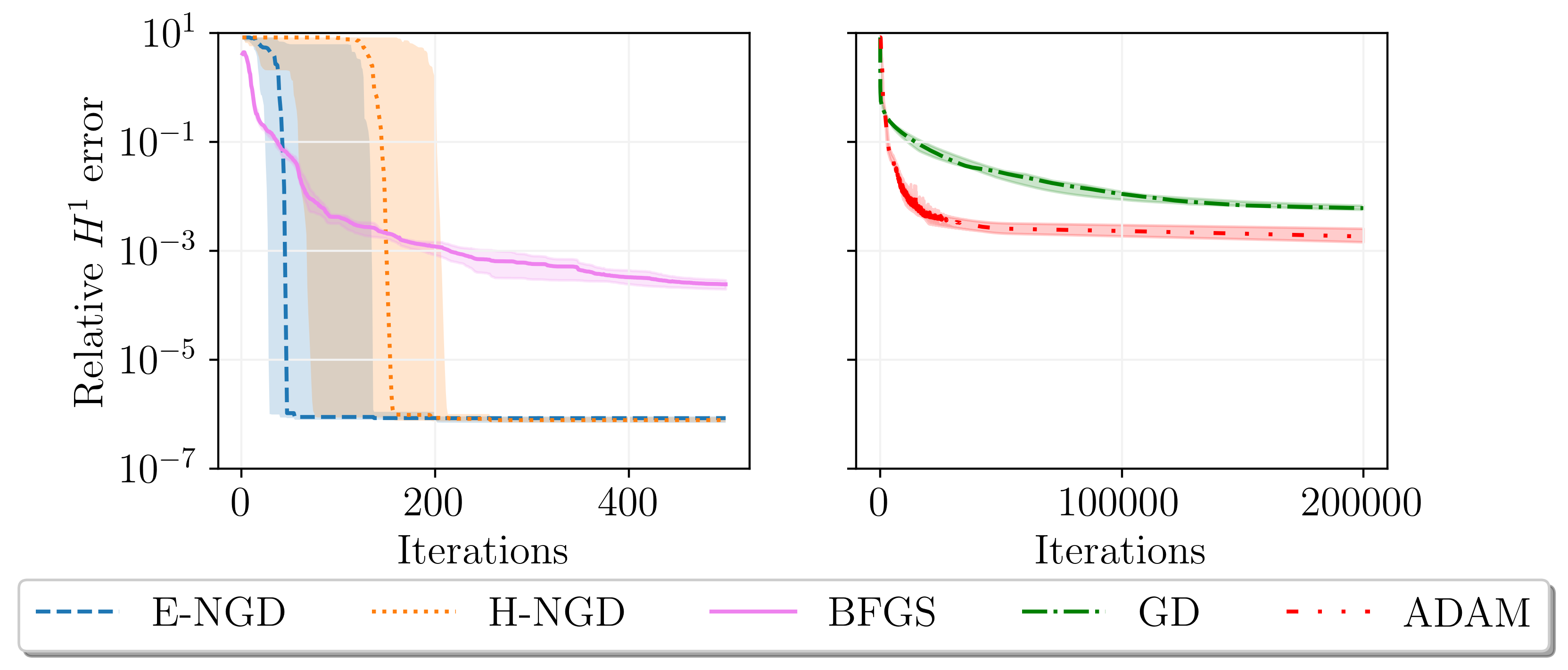}
    \vspace{-.5cm}
    \caption{The plot shows the median of the relative $H^1$ errors for the nonlinear example throughout the training process for the five optimizers: energy natural gradient descent, Hilbert natural gradient descent, BFGS, vanilla gradient descent and Adam. The shaded area displays the region between the first and third quartile of 10 runs for different initializations of the network's parameters. Note that GD and Adam are run for 400 times more iterations.}
\end{figure}

\subsection{Training a Deep Network}
\mzz
    To demonstrate the capability of the energy natural gradient method to be applied to larger and deeper networks, we use the two dimensional Poisson example from section \ref{sec:Poisson_2d} and employ a network with three hidden layers of width 50, which corresponds to roughly 5k trainable weights. We train the network for only 100 iterations. Apart from this we keep all other settings unchanged. We report the results in Table \ref{table:deep_runtimes}. 
\eee

    \begin{table}[h!]
    \begin{center}
    \begin{tabular}{|c|c|c|c|}
    \hline
             & Median $L^2$ Error           & Time Iter         & Time Full    \\
    \hline 
    E-NGD     & $2.5\cdot 10^{-6}$  & $1.1 \textrm{min}$ & $1.8 \textrm{h}$   \\
    \hline
    \end{tabular}
    \caption{\mzz Results for the ENGD training of the deep network for the two dimensional Poisson equation. Reported is the median $L^2$ error over 10 initializations, the time for one iteration and the full optimization time. \eee}\label{table:deep_runtimes}
    \end{center}
    \end{table}
\mzz
We conclude that the energy natural gradient approach can be used for larger and deeper networks, albeit at a higher computational cost. Note however, that we could reduce the amount of iterations required to obtain convergence. The the small networks considered in the main part of the manuscript perform equally well for our tasks at hand, which explains why we did not choose deep networks.
\eee

\end{document}